\documentclass[11pt]{article}

\usepackage[final]{acl}

\usepackage{times}
\usepackage{latexsym}

\usepackage[T1]{fontenc}

\usepackage[utf8]{inputenc}

\usepackage{microtype}

\usepackage{inconsolata}

\usepackage{graphicx}

\usepackage{amsmath}
\usepackage{amssymb}
\usepackage{mathtools}
\usepackage{amsthm}
\usepackage{bbm}
\usepackage{enumitem}
\usepackage{multirow}
\usepackage{wrapfig}
\usepackage{float}
\usepackage{hyperref}
\usepackage{hhline}
\usepackage{booktabs}
\usepackage[table]{xcolor}
\usepackage{makecell}
\usepackage{subcaption}
\usepackage{pdflscape}
\usepackage{adjustbox}

\usepackage[capitalize,noabbrev]{cleveref}
\usepackage{xcolor}
\usepackage[most]{tcolorbox}

\usepackage{tikz}
\usetikzlibrary{decorations.pathmorphing}
\newcommand{\utilde}[1]{#1\rlap{\hspace{-0.25mm}\textsuperscript{\scalebox{0.88}{$\dagger$}}}}

\newcommand{\pairmethod}[2]{\shortstack[c]{#1/\\#2}}

\definecolor{boxframe}{RGB}{110,110,110}
\definecolor{boxhead}{RGB}{50,50,50}

\theoremstyle{plain}
\newtheorem{theorem}{Theorem}[section]
\newtheorem{proposition}[theorem]{Proposition}
\newtheorem{lemma}[theorem]{Lemma}
\newtheorem{corollary}[theorem]{Corollary}
\theoremstyle{definition}

\theoremstyle{remark}

\usepackage[textsize=tiny]{todonotes}

\title{F-GRPO: Don't Let Your Policy Learn the Obvious and Forget the Rare}

\author{
  \textbf{Daniil Plyusov}\textsuperscript{1,2,*},
  \textbf{Alexey Gorbatovski}\textsuperscript{1,*},
  \textbf{Boris Shaposhnikov}\textsuperscript{1},
  \textbf{Viacheslav Sinii}\textsuperscript{1},
  \\
  \textbf{Alexey Malakhov}\textsuperscript{1},
  \textbf{Daria Korotyshova}\textsuperscript{1},
  \textbf{Daniil Gavrilov}\textsuperscript{1}
  \\
  \textsuperscript{1}T-Tech,
  \textsuperscript{2}Saint Petersburg Electrotechnical University ``LETI''
  \\
  \small{\textsuperscript{*}Equal contribution.}
  \\
  \small{\textbf{Correspondence:}
  \href{mailto:a.gorbatovskiy@t-tech.dev}{a.gorbatovskiy@t-tech.dev}}
}

\begin{document}
\maketitle
\begin{abstract}
Reinforcement Learning with Verifiable Rewards (RLVR) is commonly based on group sampling to estimate advantages and stabilize policy updates. In practice, computational limits often rule out very large groups, so training proceeds with finite rollout sets that can reinforce only the correct behavior they expose. At practical group sizes, updates can miss rare-correct trajectories while still containing mixed rewards, concentrating probability on more common sampled solutions. We derive the probability of such prompt-local tail-miss events as a function of group size, showing non-monotonic behavior, and in the categorical abstraction characterize how unsampled-correct mass can shrink even as total correct mass grows. Motivated by this analysis, we propose a difficulty-aware scaling coefficient, inspired by Focal loss, that down-weights updates on high-success sampled groups. Empirically, categorical simulation illustrates the same effect in the categorical setting, Maze provides a single-solution test, and LLM experiments include a representative GRPO group-size sweep together with fixed-$N$ transfer across GRPO, DAPO, and CISPO. On Qwen2.5-7B at $N{=}8$, our method improves average math pass@256 from 64.1 → 70.3 (GRPO), 69.3 → 72.5 (DAPO), and 73.2 → 76.8 (CISPO); OOD pass@256 also improves in all three cases, without increasing group size or computational cost.
\end{abstract}

\section{Introduction}
\label{sec:intro}

Reinforcement Learning with Verifiable Rewards (RLVR) has become a standard paradigm for post-training large language models (LLMs), enabling strong gains on reasoning-intensive tasks without reliance on human preference data \citep{rlvr_survey}. By leveraging automatically checkable reward signals, RLVR has driven state-of-the-art performance in mathematical reasoning \citep{aime_amc}, code generation \citep{swe-bench}, and general problem solving \citep{arc-agi2}, and is now widely adopted in large-scale post-training \citep{deepseek_r1, qwen3, kimi-k2, grpo}.

Despite these successes, a growing body of work suggests that RLVR does not primarily introduce new knowledge, but instead sharpens the output distribution toward solutions already accessible to the base model \citep{rl_passk_incentivize, can_grpo_transcend, wu2025invisible, dang2025weight}. Empirical evidence based on pass@$k$ \citep{pass_at_k} indicates that RLVR-trained models may underperform their base counterparts at sufficiently large sampling budgets, consistent with a narrowing of solution diversity \citep{rl_squeezes}. At the same time, other studies argue that prolonged or carefully scaled RL can expand the effective reasoning boundary \citep{prorl, rl_new_skills}, leaving the role of RLVR an open question.

\begin{figure*}[t]
\centering
\includegraphics[width=\textwidth]{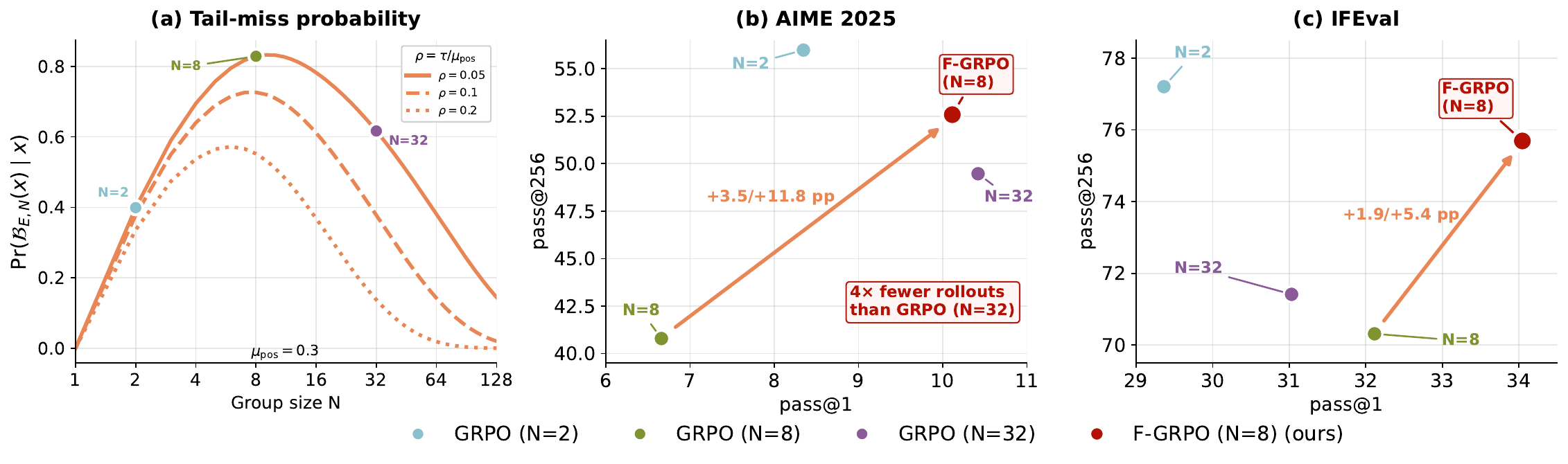}
\vspace{-1.7em}
\caption{
(a) Schematic theoretical tail-miss probability: an update is \emph{active} (mixed rewards) but \emph{misses} rare-correct solutions within a prompt. The curve is non-monotonic in group size \(N\): small groups are often inactive, large groups cover rare modes, and intermediate groups can combine activity with incomplete coverage.
(b,c) Empirical consequences on AIME 2025 (math) and IFEval (OOD): GRPO at $N{=}8$ improves pass@1 over $N{=}2$ but degrades pass@256, consistent with the sharpening regime. F-GRPO at $N{=}8$ achieves pass@256 comparable to GRPO at $N{=}32$ while maintaining pass@1, with $4{\times}$ fewer rollouts per prompt.
}
\label{fig:main}
\vspace{-1.0em}
\end{figure*}

Most modern RLVR systems rely on group-relative methods such as GRPO \citep{grpo} and its variants \citep{dapo, cispo, drgrpo}, which compute advantages from multiple rollouts per prompt. The group size thus becomes a critical design choice, yet existing work provides conflicting guidance: \citet{it_takes_2} show that two rollouts suffice and connect GRPO to DPO \citep{dpo}, while \citet{brorl} advocate scaling rollouts to broaden exploration. Since group size controls how much of a prompt's rollout space is exposed to each update, understanding its interaction with sharpening is essential. This raises a fundamental question: \emph{how does group size affect the optimization dynamics of group-relative RLVR with binary rewards, and can we mitigate sharpening without scaling computational cost?}

Our contributions are as follows: 
\begin{itemize}[leftmargin=*, itemsep=2pt, topsep=4pt]
\item We make explicit the tail-miss probability, characterizing when an active RLVR update omits a rare correct subset for a prompt. Its non-monotonic dependence on group size helps reconcile prior conflicting findings: small groups can preserve high pass@$k$ through inactivity, large groups through coverage, while intermediate groups, common under compute constraints, may maximize active tail-miss events.
\item Building on the categorical framework of \citet{brorl}, we analyze redistribution within the correct set and show that unsampled-correct mass can decrease even when total correct mass increases.
\item We use a single-solution Maze setup, showing that pass@$k$ can degrade even when each prompt has a unique correct sequence.
\item We propose F-GRPO, a difficulty-aware advantage scaling applicable to group-relative objectives including GRPO, DAPO, and CISPO, and show at fixed $N{=}8$ that it transfers across methods and models, consistently improving math and OOD pass@256 while preserving or improving pass@1, without additional rollout cost.
\end{itemize}

\section{Preliminaries}
\label{sec:prelim}

\subsection{Reinforcement Learning with Verifiable Rewards}
\label{sec:prelim:rlvr}

We consider RLVR for language model reasoning. Given a prompt $x$ from a distribution $D$, the policy $\pi_\theta$ generates complete responses (trajectories). We sample a group of $N$ i.i.d.\ rollouts $\{o_i\}_{i=1}^N \sim \pi_\theta(\cdot \mid x)$ and assign binary outcome rewards

\noindent
\begin{equation}
R_i = R_w + (R_c - R_w)\,\mathbb{I}[o_i \text{ is correct}]
\label{eq:binary_reward}
\end{equation}
where $R_c > R_w$ (typically $R_c = 1$, $R_w \in \{0, -1\}$). We work with outcome-level rewards: the reward depends only on final correctness.

For each prompt $x$, let $\Omega_x$ denote the space of complete rollouts and $\mathcal{C}(x)\subseteq \Omega_x$ the subset of correct rollouts. The per-prompt success probability is

\noindent
\begin{equation}
\mu_{\mathrm{pos}}(x) := \Pr_{o\sim\pi_\theta(\cdot|x)}[o\in \mathcal{C}(x)].
\label{eq:mu_pos_def}
\end{equation}
In training, prompts are drawn from the empirical prompt distribution; the finite-sampling analysis below is stated conditionally on a fixed prompt $x$.

\subsection{Group-Relative Policy Optimization}
\label{sec:prelim:grpo}

Group Relative Policy Optimization (GRPO)~\citep{grpo} eliminates the learned value function by computing advantages relative to the sampled group. For a prompt $x$ with $N$ rollouts $\{o_i\}_{i=1}^N$ and rewards $\{R_i\}_{i=1}^N$, the group-relative advantage is

\noindent
\begin{equation}
\widehat{A}_i^{\mathrm{GRPO}} = \frac{R_i - \bar{R}}{\sigma_R + \epsilon},
\label{eq:grpo_adv}
\end{equation}
where $\bar{R} = \frac{1}{N}\sum_{j=1}^N R_j$ and $\sigma_R = \mathrm{std}(\{R_j\}_{j=1}^N)$.

GRPO optimizes a clipped surrogate objective using token-level importance ratios; Appendix~\ref{app:objective_details} records the GRPO, DAPO, and CISPO objective details used in our experiments.

A key property of group-relative advantages is that when all sampled rewards are identical ($\sigma_R = 0$), we have $\widehat{A}_i^{\mathrm{GRPO}} = 0$ for all $i$, which yields zero learning signal. This occurs when all rollouts are correct or all are incorrect.

\subsection{Categorical Policy Framework}
\label{sec:prelim:categorical}

To analyze how RLVR updates redistribute probability mass, we adopt the categorical policy framework of \citet{brorl}. Consider $p = \mathrm{softmax}(z)$ over a finite action space $\mathcal{A}$, partitioned into correct actions $\mathcal{P}$ and incorrect actions $\mathcal{I} = \mathcal{A} \setminus \mathcal{P}$. Define the total correct and incorrect masses

\noindent
\begin{equation}
Q_{\mathrm{pos}} := \sum_{i \in \mathcal{P}} p_i, \quad Q_{\mathrm{neg}} := 1 - Q_{\mathrm{pos}}.
\label{eq:Q_masses}
\end{equation}
Draw $N$ i.i.d.\ samples from $p$. Let $S_+ \subseteq \mathcal{P}$ and $S_- \subseteq \mathcal{I}$ denote sampled correct and incorrect actions, and let $U = \mathcal{A} \setminus (S_+ \cup S_-)$ denote the unsampled actions. Define $P_{\mathrm{pos}}:=\sum_{i\in S_+}p_i$, $P_{\mathrm{neg}}:=\sum_{i\in S_-}p_i$, $S_{+,2}:=\sum_{i\in S_+}p_i^2$, $S_{-,2}:=\sum_{i\in S_-}p_i^2$, $U_{\mathrm{pos},2} := \sum_{i \in U \cap \mathcal{P}} p_i^2$, and $U_{\mathrm{neg},2} := \sum_{i \in U \cap \mathcal{I}} p_i^2$. Assign rewards as in~\eqref{eq:binary_reward} for sampled actions, with $R_i = 0$ for unsampled. The batch baseline is $S_R := R_c P_{\mathrm{pos}} + R_w P_{\mathrm{neg}}$.

We analyze TRPO-style linear surrogate updates and their unbiased Monte Carlo estimates. Under standard regularity conditions, expectation and differentiation may be interchanged \citep{asmussen2007stochastic, brorl}. Differentiating the sample surrogate with respect to the logits \(z_j\) (using \(\partial p_i/\partial z_j = p_i(\delta_{ij}-p_j)\)) yields the one-step logit update

\noindent
\begin{equation}
\Delta z_i = \frac{\eta}{N} p_i (R_i - S_R),
\label{eq:dz_update}
\end{equation}
where $\eta$ is the learning rate. For unsampled actions ($i \in U$), this reduces to $\Delta z_i = -\frac{\eta}{N} S_R p_i$.

From this update rule, \citet{brorl} derive the one-step change in total correct mass; Appendix~\ref{app:categorical_prelim} records the expression and term interpretation.

This categorical framework captures one-step local redistribution under sampled updates and provides the controlled abstraction used in Section~\ref{sec:theory:unsampled}. To avoid conflation, we keep its notation separate from trajectory-level quantities: $\mu_{\mathrm{pos}}(x)$ denotes per-prompt success probability~\eqref{eq:mu_pos_def}, while $Q_{\mathrm{pos}}$ denotes positive mass in the categorical abstraction~\eqref{eq:Q_masses}.

\section{Finite-Sampling Bias in Group-Relative RLVR}
\label{sec:theory}

Recent work offers seemingly conflicting guidance on group size in RLVR: very small groups ($N=2$) can match larger ones efficiently~\citep{it_takes_2}, moderate sizes improve pass@1 while sharpening the distribution~\citep{rewarding_unlikely}, and large groups stabilize learning~\citep{brorl}. These recommendations can be understood by analyzing RLVR at the level where group-relative sampling acts directly: a finite set of rollouts for a prompt. We study two complementary consequences of this finite sampling. First, for a fixed prompt, an update may be active while omitting a low-mass correct region. Second, given the sampled set, local redistribution can move probability away from unsampled correct outcomes. Together, these quantities describe how finite groups can improve accuracy while narrowing coverage.

\subsection{Tail-miss probability and the group size trade-off}
\label{sec:theory:tailmiss}

We first isolate a single prompt $x$ and ask when a finite group can update the policy without ever sampling a low-mass correct region. Fix a target subset $E_x\subseteq\mathcal{C}(x)$ of correct rollouts. Let

\noindent
\begin{equation}
\tau_E(x) := \Pr_{o\sim\pi_\theta(\cdot|x)}[o\in E_x],
\label{eq:tau_def}
\end{equation}
and assume $0 < \tau_E(x) < \mu_{\mathrm{pos}}(x)$. We call $E_x$ rare under the current policy when $\tau_E(x)/\mu_{\mathrm{pos}}(x)$ is small.

Let $X_x$ denote the number of correct rollouts among the $N$ samples for prompt $x$. For group-relative methods such as GRPO, the learning signal vanishes when all sampled rewards are identical, i.e., $X_x \in \{0, N\}$. Define the \emph{active event}

\noindent
\begin{equation}
\mathcal{A}_N(x) := \{0 < X_x < N\},
\label{eq:active_event}
\end{equation}
with probability $\Pr(\mathcal{A}_N(x)\mid x) = 1 - \mu_{\mathrm{pos}}(x)^N - (1 - \mu_{\mathrm{pos}}(x))^N$.

Let $Y_i^E = \mathbb{I}[o_i \in E_x]$, so $\Pr(Y_i^E = 1\mid x)=\tau_E(x)$. We are interested in the event that the update is active yet the target subset $E_x$ receives no samples:

\noindent
\begin{equation}
\mathcal{B}_{E,N}(x) := \mathcal{A}_N(x) \cap \Big\{\sum_{i=1}^N Y_i^E = 0\Big\}.
\label{eq:Btau_def}
\end{equation}

\begin{lemma}
\label{lem:P_Btau}
For any $N \geq 1$, writing $\mu_{\mathrm{pos}} = \mu_{\mathrm{pos}}(x)$ and $\tau = \tau_E(x)$ for brevity,
\begin{equation*}
\begin{aligned}
\Pr(\mathcal{B}_{E,N}(x)\mid x)
= {} &(1 - \tau)^N - (\mu_{\mathrm{pos}} - \tau)^N \\
&- (1 - \mu_{\mathrm{pos}})^N.
\end{aligned}
\label{eq:P_Btau}
\end{equation*}
\end{lemma}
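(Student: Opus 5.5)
The plan is to classify each rollout into one of three disjoint types and use independence of the $N$ draws. Since $E_x\subseteq\mathcal{C}(x)$, every rollout $o_i$ lies in exactly one of three cells: $E_x$ (probability $\tau$), $\mathcal{C}(x)\setminus E_x$ (probability $\mu_{\mathrm{pos}}-\tau$), or $\Omega_x\setminus\mathcal{C}(x)$ (probability $1-\mu_{\mathrm{pos}}$). The count vector over these cells is therefore multinomial, and $\mathcal{B}_{E,N}(x)$ can be described purely in terms of which cells are hit.

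First I will rewrite the event. Let $Z:=\{\sum_i Y_i^E=0\}$, the event that no sample lands in $E_x$. Because $\mathcal{A}_N(x)^c=\{X_x=0\}\cup\{X_x=N\}$ is a disjoint union for $N\ge 1$, we get
\[
\Pr(\mathcal{B}_{E,N}(x)\mid x)=\Pr(Z)-\Pr(Z\cap\{X_x=0\})-\Pr(Z\cap\{X_x=N\}).
\]
I then compute each term by independence.
\begin{itemize}
\item $\Pr(Z)=(1-\tau)^N$, since each sample avoids $E_x$ independently with probability $1-\tau$.
\item $Z\cap\{X_x=0\}=\{X_x=0\}$, because having no correct samples already implies having none in $E_x$. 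Its probability is $(1-\mu_{\mathrm{pos}})^N$.
\item $Z\cap\{X_x=N\}$ is the event that every sample lies in $\mathcal{C}(x)\setminus E_x$. Its probability is $(\mu_{\mathrm{pos}}-\tau)^N$.
\end{itemize}
Substituting these three values gives exactly the claimed formula.

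There is no real obstacle. The only points needing care are the following:
\begin{itemize}
\item The inclusion $E_x\subseteq\mathcal{C}(x)$ is what makes $\Pr(o\in\mathcal{C}(x)\setminus E_x)=\mu_{\mathrm{pos}}-\tau$ valid, and what makes $\{X_x=0\}\subseteq Z$ hold.
\item For $N\ge1$ the events $\{X_x=0\}$ and $\{X_x=N\}$ are disjoint, so there is no inclusion–exclusion correction term.
\end{itemize}
As a sanity check, $N=1$ yields $(1-\tau)-(\mu_{\mathrm{pos}}-\tau)-(1-\mu_{\mathrm{pos}})=0$. This is consistent with the fact that a single rollout is never active.
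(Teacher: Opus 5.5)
Your proposal is correct and follows essentially the same route as the paper. Both proofs partition each rollout into $E_x$, $\mathcal{C}(x)\setminus E_x$, or $\Omega_x\setminus\mathcal{C}(x)$. Both then start from $\Pr(\text{no sample in } E_x)=(1-\tau)^N$ and subtract the two disjoint inactive cases: all samples in $\mathcal{C}(x)\setminus E_x$, with probability $(\mu_{\mathrm{pos}}-\tau)^N$, and all samples incorrect, with probability $(1-\mu_{\mathrm{pos}})^N$.
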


The proof partitions rollouts into three disjoint regions and applies inclusion-exclusion (Appendix~\ref{app:proof_tailmiss}).

Lemma~\ref{lem:P_Btau} reveals a non-monotonic dependence on $N$ after conditioning on $x$. Two competing effects determine $\Pr(\mathcal{B}_{E,N}(x)\mid x)$: the coverage factor $(1 - \tau_E(x))^N$ decreases with $N$, improving the chance of sampling $E_x$, while activity $\Pr(\mathcal{A}_N(x)\mid x)$ increases from near zero toward one. Their interaction induces the qualitative picture in Figure~\ref{fig:main}(a) and Appendix Figure~\ref{fig:tailmiss_detailed}: small $N$ can preserve higher pass@$k$ through inactivity~\citep{it_takes_2, dang2025weight}, large $N$ through coverage~\citep{brorl}, while intermediate $N$ can yield frequent active updates that miss low-mass correct regions, consistent with distribution sharpening observations~\citep{rewarding_unlikely}. Because $\mu_{\mathrm{pos}}(x)$ and $\tau_E(x)$ vary across prompts and evolve during optimization, the result is most naturally read as a regime descriptor: it characterizes how inactivity, undercoverage, and coverage trade off as $N$ changes. Section~\ref{sec:results:scaling} later compares this qualitative picture with empirical group-size trends.

\subsection{Unsampled-correct mass under finite sampling}
\label{sec:theory:unsampled}

The tail-miss analysis identifies \emph{when} low-mass correct regions are vulnerable: active updates can occur while a target subset $E_x$ is absent from the sampled group. We now use the categorical framework (Section~\ref{sec:prelim:categorical}) to characterize the \emph{mechanism} by which unsampled positive mass can decrease.

While~\eqref{eq:delta_Qpos} shows that total correct mass $Q_{\mathrm{pos}}$ tends to increase with $N$ in the local categorical slice, it does not reveal redistribution within the correct set. Define the unsampled-correct mass

\noindent
\begin{equation}
Q_{\mathrm{u,pos}} := \sum_{i \in U \cap \mathcal{P}} p_i = Q_{\mathrm{pos}} - P_{\mathrm{pos}}.
\label{eq:Qupos_def}
\end{equation}
This quantity measures the positive probability mass outside the distinct sampled positive set in the categorical slice.

\begin{proposition}
\label{prop:delta_upos}
Under the one-step surrogate update~\eqref{eq:dz_update},
\noindent
\begingroup
\small
\begin{equation}
\begin{aligned}
&\Delta Q_{\mathrm{u,pos}} = \frac{\eta}{N} \Big[ 
\underbrace{-S_R\, U_{\mathrm{pos},2}}_{\text{direct drift}} \\[4pt]
&- Q_{\mathrm{u,pos}} \underbrace{\big( (R_c {-} S_R) S_{+,2} + (R_w {-} S_R) S_{-,2} - S_R U_2 \big)}_{\text{normalization coupling}}
\Big].
\end{aligned}
\label{eq:delta_Qupos}
\end{equation}
\endgroup
\end{proposition}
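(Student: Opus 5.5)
The plan is a first-order (linearized) computation of how the softmax responds to the logit update~\eqref{eq:dz_update}. For a small logit perturbation $\Delta z$ we have $\Delta p_i = p_i(\Delta z_i - \sum_j p_j \Delta z_j)$ to first order in $\eta$, which follows from the Jacobian $\partial p_i/\partial z_j = p_i(\delta_{ij}-p_j)$ already used in Section~\ref{sec:prelim:categorical}. Summing over the index set $U\cap\mathcal{P}$ then gives
\begin{equation*}
\Delta Q_{\mathrm{u,pos}} = \sum_{i\in U\cap\mathcal{P}} p_i\,\Delta z_i \;-\; Q_{\mathrm{u,pos}}\sum_{j\in\mathcal{A}} p_j\,\Delta z_j .
\end{equation*}
The rest of the proof evaluates these two sums by splitting $\mathcal{A}$ into $S_+$, $S_-$ and $U$.

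The first sum, the direct term, is simple. For $i\in U$ the update reduces to $\Delta z_i = -\frac{\eta}{N}S_R p_i$, so $\sum_{i\in U\cap\mathcal{P}} p_i\Delta z_i = -\frac{\eta}{N}S_R U_{\mathrm{pos},2}$. This is the "direct drift" term.

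The second sum is the normalization term. On $S_+$ we have $R_i=R_c$, on $S_-$ we have $R_i=R_w$, and on $U$ we have $R_i=0$. Hence
\begin{equation*}
\sum_j p_j\Delta z_j = \frac{\eta}{N}\Big[(R_c-S_R)S_{+,2} + (R_w-S_R)S_{-,2} - S_R U_2\Big],
\end{equation*}
where $U_2 := U_{\mathrm{pos},2}+U_{\mathrm{neg},2}=\sum_{i\in U}p_i^2$. I would state this definition explicitly, because the proposition uses $U_2$ without introducing it. Substituting both sums and factoring out $\eta/N$ gives exactly~\eqref{eq:delta_Qupos}.

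The main obstacle is not the algebra but pinning down the sense in which "$\Delta$" is meant. The identity holds exactly for the first-order (linearized) change in $p$, or equivalently for the directional derivative at $\eta\to 0$. It does not hold exactly for the finite-step softmax change, which carries $O(\eta^2)$ corrections. I would therefore frame the statement as the first-order change, consistent with how $\Delta Q_{\mathrm{pos}}$ in~\eqref{eq:delta_Qpos} is derived in \citet{brorl}.

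I would also check two conventions against that derivation. First, the sample update treats each distinct sampled action once, since $S_+$ and $S_-$ are sets and are weighted by $p_i^2$ rather than by counts. Second, $S_R$ is the baseline defined on the distinct sampled masses. As a consistency check, adding the analogous expression for $\Delta P_{\mathrm{pos}}$ should recover~\eqref{eq:delta_Qpos}.
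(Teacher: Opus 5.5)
Your proposal is correct and follows the paper's own proof: it applies the first-order subset-mass identity with $\mathcal{S}=U\cap\mathcal{P}$, evaluates the direct term using $\Delta z_i=-\frac{\eta}{N}S_R p_i$ on unsampled actions, and splits the normalization sum over $S_+$, $S_-$, $U$ by reward value. Your remarks are also sound: the identity is a first-order (linearized) statement, $U_2=\sum_{i\in U}p_i^2$ (defined only in the notation table), and adding $\Delta P_{\mathrm{pos}}$ does recover~\eqref{eq:delta_Qpos}.
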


The proof applies the subset-mass identity from Appendix~\ref{app:first_order} with $\mathcal{S} = U \cap \mathcal{P}$; see Appendix~\ref{app:proof_upos} for details.

Equation~\eqref{eq:delta_Qupos} shows that $\Delta Q_{\mathrm{u,pos}}$ can be negative even when $\Delta Q_{\mathrm{pos}} > 0$: total positive mass can increase while unsampled positive mass decreases. This complements~\citet{brorl}, who showed that reward-positive batches ($S_R > 0$) push unsampled logits downward. Our formula makes explicit how this affects redistribution \emph{within} the correct set in the finite-action abstraction.

The mechanism operates through two terms. The \emph{direct drift} $-S_R U_{\mathrm{pos},2}$ pushes unsampled-correct mass downward when $S_R > 0$, with magnitude scaling with the concentration $U_{\mathrm{pos},2}$. The \emph{normalization coupling} (analyzed in detail in Appendix~\ref{app:term_analysis}) captures how probability gains by sampled-correct actions draw mass away from unsampled-correct ones through softmax normalization. In reward-positive batches, both terms can contribute negatively, so the update may reinforce the sampled correct subset while reducing probability on unsampled correct actions.

As \citet{brorl} observe, scaling $N$ suppresses $U_{\mathrm{pos},2}$ and ensures $\Delta Q_{\mathrm{pos}} \geq 0$ with the \textit{direct drift} term tending to zero. However, practical constraints limit how far $N$ can be scaled: computational cost grows linearly with $N$, and improving pass@1 requires active groups (ruling out very small $N$ where most groups are homogeneous). This yields the intermediate-$N$ risk from Section~\ref{sec:theory:tailmiss}: active updates that still miss low-probability correct regions and then pull mass toward the sampled correct subset. The categorical simulation in Section~\ref{sec:categorical_sim} measures this redistribution directly through retained positive mass.

\section{F-GRPO: Focal weighting for Group-Relative Policy Optimization}
\label{sec:method}

The finite-sampling effects analyzed in Section~\ref{sec:theory} are prompt-local. For each prompt, rewards are assigned to sampled complete rollouts and group-relative coefficients are computed from the same finite group. Consequently, direct positive evidence enters the update only through correct rollouts that appear in the group, while unsampled correct behavior is affected only indirectly. F-GRPO follows this group granularity by multiplying the group-relative signal with a single group-dependent weight.

The rare correct subset $E_x$ and the unsampled-correct mass are latent during training, so the analysis does not define a directly usable group weight. The available group-level statistic is the number of correct rollouts $X$, equivalently the empirical success rate $\widehat{\mu}_{\mathrm{pos}}(x)=X/N$. We use this statistic as an observable proxy for sampled positive exposure. A small nonzero value indicates that correct behavior has appeared but remains sparsely observed, whereas a large value indicates that the current update is supported by many positive rollouts. The resulting weighting attenuates high-success groups more strongly and attenuates sparse-positive mixed groups less.

\subsection{Focal Weight}

Define the empirical success rate for prompt $x$ as

\noindent
\begin{equation}
\widehat{\mu}_{\mathrm{pos}}(x) := \frac{\bar{R}(x) - R_w}{R_c - R_w} = \frac{X}{N} \in [0,1],
\label{eq:muhat_method}
\end{equation}
where $X$ is the number of correct rollouts and $\bar{R}(x) = \frac{1}{N}\sum_{i=1}^N R_i$ is the group mean reward. This is an unbiased estimator of the true success probability: $\mathbb{E}[\widehat{\mu}_{\mathrm{pos}}(x)] = \mu_{\mathrm{pos}}(x)$.

Under binary rewards, \(\widehat{\mu}_{\mathrm{pos}}(x)=X/N\) is a convenient observable summary of how much correct behavior the sampled group has already exposed for prompt $x$. It distinguishes groups where correct rollouts are still rare from groups where the sampled set already contains many correct rollouts. Appendix~\ref{app:proof_sr_monotone} further shows that the expected distinct sampled-correct mass \(\mathbb E[P_{\mathrm{pos}}\mid X=k]\) is non-decreasing in \(k\), so larger \(\widehat{\mu}_{\mathrm{pos}}(x)\) corresponds in expectation to broader sampled-correct exposure; in the categorical picture, the expected \(S_R\) moves in the same direction.

Using this proxy, we apply a Focal-style transformation~\citep{focal} that reduces the contribution of high-success groups:
\noindent
\begin{equation}
g(x) := \big(1 - \widehat{\mu}_{\mathrm{pos}}(x)\big)^\gamma, \quad \gamma \geq 0.
\label{eq:g_method}
\end{equation}
When $\gamma = 0$, $g(x) = 1$ for all prompts, recovering standard GRPO. For $\gamma > 0$, prompts with high empirical success rate receive reduced weight: $g(x) \to 0$ as $\widehat{\mu}_{\mathrm{pos}}(x) \to 1$.

With binary rewards, the GRPO advantage magnitudes vary with $\mu_{\mathrm{pos}}(x)$. The Focal weight $g(x) = (1 - \widehat{\mu}_{\mathrm{pos}}(x))^\gamma$ scales these magnitudes, attenuating high-success groups more strongly than mixed low-success ones. Appendix~\ref{app:focal_visualization} visualizes this effect.

\begin{figure*}[t]
\centering
\includegraphics[width=\textwidth]{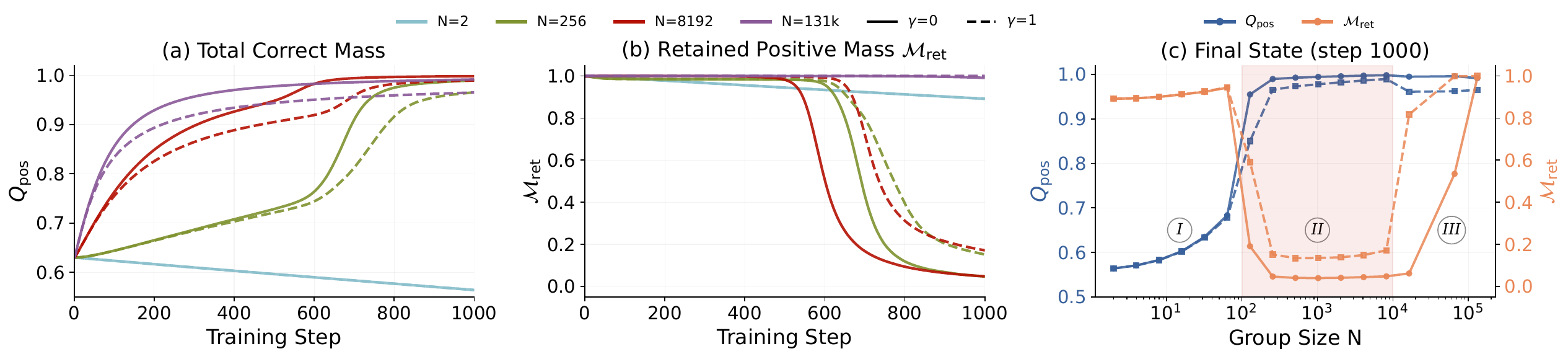}
\caption{Categorical policy simulation following \citet{brorl} setup. \textbf{(a)}~Total correct mass $Q_{\mathrm{pos}}$ vs.\ training step. \textbf{(b)}~Retained positive mass $\mathcal{M}_{\mathrm{ret}}$ vs.\ step. \textbf{(c)}~Final metrics vs.\ group size $N$, with three regimes: \textbf{I}~slow $Q_{\mathrm{pos}}$ growth, positive mass retained; \textbf{II}~concentration zone (shaded), $Q_{\mathrm{pos}}$ grows but $\mathcal{M}_{\mathrm{ret}}$ collapses; \textbf{III}~both metrics high. Solid: $\gamma{=}0$; dashed: $\gamma{=}1$. $N{=}131\text{k}$ maintains $\mathcal{M}_{\mathrm{ret}}{\approx}1$ throughout, consistent with $\Pr(\mathcal{B}_{E,N}(x)\mid x)<10^{-3}$ for a non-anchor correct action under the initial distribution (Appendix~\ref{app:categorical_sim}).}
\label{fig:categorical_sim}
\vspace{-0.5em}
\end{figure*}

\subsection{Integration with Group-Relative Methods}

We incorporate the Focal weight by scaling the group-relative advantage:
\noindent
\begin{equation}
\widehat{A}_i^{\mathrm{F-GRPO}} := g(x) \cdot \widehat{A}_i^{\mathrm{GRPO}}.
\label{eq:da_advantage}
\end{equation}
The Focal weight is orthogonal to the clipping and importance-weighting choices in the base optimizer: it multiplies the group-level advantage without changing reward values, clipping bounds, or importance-weight construction. It can therefore be combined with objectives that differ in these details. We denote the corresponding variants as F-DAPO and F-CISPO.

The modification is minimal: a single scalar $g(x) \in [0,1]$ applied uniformly to all rollouts from the same prompt. No additional networks are required; $\gamma$ is the only new hyperparameter.

\section{Experiments \& Results}
\label{sec:exps_results}

We evaluate the paper's claims in four complementary settings. We first test the local redistribution mechanism in a closed categorical simulation where the relevant quantities are directly observable. We then use single-solution Maze to ask whether evaluation degradation can appear across prompts even when each prompt has only one correct trajectory, and compare the resulting picture with one representative GRPO group-size sweep in LLM training. Finally, we test whether the proposed fixed-$N$ reweighting transfers across GRPO-family methods and remains distinct from generic regularization or simple update-scale reduction.

\subsection{Empirical Validation via Categorical Simulation}
\label{sec:categorical_sim}

To complement the simulation analysis of \citet{brorl}, we conduct experiments under the same categorical policy framework (Section~\ref{sec:prelim:categorical}) with an additional focus on \emph{which} correct actions retain probability mass. This is the closed-world setting where the local quantities from Section~\ref{sec:theory:unsampled} are directly observable, so it directly tests the categorical mechanism: whether total positive mass can grow while retained positive mass collapses. Following \citet{brorl}, we simulate a softmax policy over $128{,}000$ actions ($10{,}000$ correct) trained with group-relative updates; see Appendix~\ref{app:categorical_sim} for details.

Beyond tracking total correct mass $Q_{\mathrm{pos}}$, we track $\mathcal{M}_{\mathrm{ret}}(t)$, the \emph{retained positive mass}, which measures the fraction of initial correct-action probability that remains at or above its starting value (Appendix~\ref{app:categorical_sim} Eq.~\ref{eq:retained_mass}). Values near $1$ indicate that initially positive actions have not lost probability mass; values near $0$ indicate concentration onto a smaller subset of correct actions.

Figure~\ref{fig:categorical_sim} presents the results. Panel~(a) confirms that $Q_{\mathrm{pos}}$ increases for all group sizes, consistent with \citet{brorl}. However, panel~(b) shows that $\mathcal{M}_{\mathrm{ret}}$ behaves non-monotonically: both small and large $N$ retain more positive mass, while intermediate values suffer severe concentration. This demonstrates that $\Delta Q_{\mathrm{pos}} > 0$ does not guarantee preservation of unsampled correct actions. Panel~(c) summarizes the final state across all group sizes, with three regimes labeled: \textbf{(I)}~small $N$ where $Q_{\mathrm{pos}}$ grows slowly but positive mass is retained; \textbf{(II)}~the concentration zone (shaded) where $Q_{\mathrm{pos}}$ grows rapidly but $\mathcal{M}_{\mathrm{ret}}$ collapses; and \textbf{(III)}~large $N$ where both metrics are high. Notably, $N{=}131{,}072$ maintains $\mathcal{M}_{\mathrm{ret}} \approx 1$ throughout training, consistent with Lemma~\ref{lem:P_Btau}, which predicts $\Pr(\mathcal{B}_{E,N}(x)\mid x)<10^{-3}$ for a non-anchor correct action under the initial distribution (see Appendix~\ref{app:categorical_sim}). Dashed lines ($\gamma{=}1$) show improved $\mathcal{M}_{\mathrm{ret}}$ retention, particularly in the concentration zone.

The specific boundaries of the concentration zone depend on the initial distribution and should not be interpreted as quantitative predictions for LLM training. The key insight is the qualitative pattern: intermediate group sizes can yield the lowest retained positive mass.

\begin{table*}[t!]
  \centering
  \scriptsize
  \setlength{\tabcolsep}{2.2pt}
  \setlength{\extrarowheight}{-0.0pt}
  \resizebox{\textwidth}{!}{%
  \begin{tabular}{l|c|cccccc|c|ccc}
  \Xhline{1.1pt}
  \rule{0pt}{2.0ex} & \multicolumn{7}{c|}{\textbf{In-domain}} & \multicolumn{4}{c}{\textbf{Out-of-domain}} \\
  \hhline{~|-|------|-|---}
  \rule{0pt}{2.0ex} \textbf{Method} & \textbf{Avg.} & \textbf{AIME24} & \textbf{AIME25} & \textbf{AMC} & \textbf{MATH500} & \textbf{Minerva} & \textbf{Olympiad} & \textbf{Avg. OOD} & \textbf{IFEval} & \textbf{SynLogic} & \textbf{GPQA}  \\
  \Xhline{1.1pt}
  \multicolumn{12}{c}{\rule{0pt}{2.0ex}\textbf{\textit{Qwen2.5-7B}}} \\[0.2ex]
  \hhline{-|-|------|-|---}
  \rule{0pt}{2.0ex}GRPO   & 37.3/64.1 & 15.0/37.7 & 6.7/40.8 & 52.9/87.3 & 75.8/92.8 & \textbf{36.0}/60.2 & \textbf{37.8}/65.8 & 17.1/55.9 & 32.1/70.3 & 7.9/51.3 & 11.3/46.2 \\
  \rowcolor{gray!15}
  F-GRPO  & \underline{\textbf{38.6}}/\underline{\textbf{70.3}} & \underline{\textbf{15.9}}/\textbf{46.2} & \underline{\textbf{10.1}}/\underline{\textbf{52.6}} & \underline{\textbf{56.2}}/\underline{\textbf{96.3}} & \underline{\textbf{76.2}}/\underline{\textbf{95.1}} & 35.7/\textbf{60.3} & 37.5/\underline{\textbf{71.6}} & \underline{\textbf{19.2}}/\underline{\textbf{63.3}} & \underline{\textbf{34.0}}/\underline{\textbf{75.7}} & \underline{\textbf{8.7}}/\underline{\textbf{57.0}} & \underline{\textbf{15.0}}/\underline{\textbf{57.3}} \\
  \rule{0pt}{2.5ex}DAPO   & 39.4/69.3 & 16.8/49.8 & \textbf{12.0}/45.6 & 53.3/91.9 & 78.6/95.2 & \textbf{35.5}/61.2 & 40.5/71.8 & 15.7/58.4 & 24.1/67.1 & 7.5/53.3 & \textbf{15.4}/54.9 \\
  \rowcolor{gray!15}
  F-DAPO  & \underline{\textbf{40.5}}/\underline{\textbf{72.5}} & \underline{\textbf{20.9}}/\textbf{53.4} & 11.5/\textbf{52.9} & \underline{\textbf{55.9}}/\textbf{93.7} & \underline{\textbf{79.1}}/\underline{\textbf{96.6}} & 35.0/\textbf{62.9} & \underline{\textbf{40.9}}/\underline{\textbf{75.6}} & \underline{\textbf{17.9}}/\underline{\textbf{63.6}} & \underline{\textbf{30.8}}/\underline{\textbf{71.1}} & \underline{\textbf{7.9}}/\underline{\textbf{62.4}} & 15.0/\textbf{57.4} \\
  
  \rule{0pt}{2.5ex}CISPO  & 39.5/73.2 & 14.6/45.9 & 9.7/59.8 & \textbf{57.8}/96.1 & 78.7/97.0 & \textbf{34.7}/63.3 & 41.5/76.9 & 14.9/59.0 & 24.2/67.9 & 8.0/53.6 & 12.6/55.5 \\
  \rowcolor{gray!15}
  F-CISPO & 39.5/\underline{\textbf{76.8}} & \textbf{14.8}/\underline{\textbf{59.7}} & \underline{\textbf{13.0}}/\textbf{64.6} & 53.3/\textbf{97.1} & \underline{\textbf{79.0}}/\textbf{97.8} & 34.6/\textbf{64.3} & \underline{\textbf{42.4}}/\textbf{77.5} & \underline{\textbf{18.1}}/\underline{\textbf{65.9}} & \underline{\textbf{30.7}}/\underline{\textbf{70.6}} & \textbf{8.2}/\underline{\textbf{60.0}} & \underline{\textbf{15.4}}/\underline{\textbf{67.1}} \\
  \Xhline{0.8pt}
  \multicolumn{12}{c}{\rule{0pt}{2.0ex}\textbf{\textit{Qwen3-4B-Base}}} \\[0.2ex]
  \hhline{-|-|------|-|---}
  \rule{0pt}{2.0ex}GRPO   & 39.9/71.1 & 14.8/52.5 & 8.3/45.6 & 54.6/96.2 & 79.5/96.4 & \textbf{39.2}/61.7 & 43.1/74.2 & 22.2/67.9 & 36.9/91.2 & 10.7/57.1 & 19.0/55.4 \\
  \rowcolor{gray!15}
  F-GRPO  & \underline{\textbf{42.7}}/\underline{\textbf{76.0}} & \underline{\textbf{17.4}}/\textbf{61.1} & \underline{\textbf{15.8}}/\underline{\textbf{60.0}} & \underline{\textbf{58.3}}/\textbf{98.2} & \underline{\textbf{82.3}}/\textbf{97.4} & 38.5/\textbf{63.6} & \underline{\textbf{43.7}}/\textbf{75.5} & \underline{\textbf{24.3}}/\underline{\textbf{73.6}} & \underline{\textbf{41.2}}/\underline{\textbf{93.6}} & 10.7/\underline{\textbf{64.5}} & \underline{\textbf{21.1}}/\underline{\textbf{62.8}} \\
  
  \rule{0pt}{2.5ex}DAPO   & 45.1/76.1 & 18.6/\textbf{63.8} & 17.6/54.0 & 62.0/96.2 & 84.4/\textbf{97.8} & \textbf{40.0}/67.1 & 47.8/77.9 & 20.7/74.4 & 33.8/88.0 & \textbf{11.8}/71.3 & 16.4/63.9 \\
  \rowcolor{gray!15}
  F-DAPO  & \underline{\textbf{45.3}}/\textbf{76.4} & \textbf{19.1}/59.7 & \textbf{17.7}/\textbf{58.3} & \underline{\textbf{63.0}}/\textbf{96.8} & 84.4/97.7 & 39.5/\textbf{67.9} & \underline{\textbf{48.2}}/\textbf{78.0} & \underline{\textbf{23.2}}/\underline{\textbf{76.8}} & \underline{\textbf{39.6}}/\underline{\textbf{90.6}} & 11.3/\textbf{72.4} & \underline{\textbf{18.7}}/\underline{\textbf{67.5}} \\
  
  \rule{0pt}{2.5ex}CISPO  & 45.8/75.3 & 21.5/59.3 & 17.6/55.4 & \textbf{63.1}/97.4 & 84.2/97.9 & 40.2/65.1 & 48.4/76.6 & 22.6/76.5 & 32.6/85.9 & \textbf{11.6}/\textbf{71.1} & 23.6/72.4 \\
  \rowcolor{gray!15}
  F-CISPO & \textbf{46.0}/\underline{\textbf{79.2}} & 21.5/\textbf{69.5} & \underline{\textbf{18.6}}/\textbf{62.7} & 60.7/\textbf{98.6} & 84.2/\textbf{98.2} & \underline{\textbf{42.1}}/\underline{\textbf{67.2}} & \underline{\textbf{48.9}}/\underline{\textbf{78.8}} & \underline{\textbf{23.9}}/\underline{\textbf{80.0}} & \underline{\textbf{35.9}}/\underline{\textbf{90.3}} & 11.1/68.0 & \underline{\textbf{24.6}}/\underline{\textbf{81.6}} \\
  \Xhline{0.8pt}
  \multicolumn{12}{c}{\rule{0pt}{2.0ex}\textbf{\textit{Llama-3.2-3B-Instruct}}} \\[0.2ex]
  \hhline{-|-|------|-|---}
  \rule{0pt}{2.0ex}GRPO   & 23.0/59.9 & 10.7/40.7 & 0.7/21.5 & \textbf{30.5}/88.2 & \textbf{55.0}/90.6 & \textbf{21.8}/59.0 & 19.4/59.3 & \textbf{25.5}/56.5 & 54.1/78.0 & \textbf{4.7}/\textbf{36.4} & \textbf{17.5}/55.1 \\
  \rowcolor{gray!15}
  F-GRPO  & 23.0/\underline{\textbf{63.4}} & \underline{\textbf{12.1}}/\textbf{46.1} & \underline{\textbf{1.0}}/\textbf{29.5} & 29.8/\textbf{90.6} & 54.1/\underline{\textbf{92.9}} & 21.0/\textbf{60.1} & \underline{\textbf{20.1}}/\underline{\textbf{61.3}} & 25.4/\textbf{57.6} & \underline{\textbf{56.4}}/\underline{\textbf{79.6}} & 4.6/35.5 & 15.2/\textbf{57.6} \\
  
  \rule{0pt}{2.5ex}DAPO   & 24.3/54.2 & \textbf{12.8}/40.8 & 1.0/18.5 & \textbf{33.1}/79.5 & 55.9/83.8 & \textbf{22.4}/54.1 & 21.0/48.4 & 23.9/51.3 & 51.2/77.8 & \textbf{4.8}/28.9 & 15.7/47.0 \\
  \rowcolor{gray!15}
  F-DAPO  & \underline{\textbf{24.8}}/\underline{\textbf{62.3}} & 11.1/\textbf{44.4} & \underline{\textbf{1.7}}/\textbf{28.7} & 31.9/\underline{\textbf{88.3}} & \underline{\textbf{58.6}}/\underline{\textbf{92.0}} & 22.3/\underline{\textbf{59.3}} & \underline{\textbf{23.2}}/\underline{\textbf{61.3}} & \underline{\textbf{24.8}}/\underline{\textbf{55.4}} & \underline{\textbf{53.0}}/\underline{\textbf{79.5}} & 4.3/\underline{\textbf{33.0}} & \underline{\textbf{17.0}}/\underline{\textbf{53.7}} \\
  
  \rule{0pt}{2.5ex}CISPO  & 24.1/58.0 & 9.7/39.4 & 1.0/\textbf{25.4} & 32.9/79.1 & \textbf{56.9}/89.1 & 21.8/\textbf{59.5} & \textbf{22.5}/55.4 & \textbf{25.7}/52.5 & \textbf{54.6}/\textbf{78.4} & 4.3/29.4 & \textbf{18.2}/\textbf{49.7} \\
  \rowcolor{gray!15}
  F-CISPO & \underline{\textbf{24.5}}/\textbf{59.7} & \underline{\textbf{10.6}}/\textbf{42.8} & \underline{\textbf{2.0}}/24.5 & \underline{\textbf{34.1}}/\textbf{82.6} & 56.5/\underline{\textbf{91.0}} & \underline{\textbf{22.1}}/58.8 & 21.5/\underline{\textbf{58.7}} & 25.0/\textbf{53.0} & 52.6/77.3 & \underline{\textbf{5.4}}/\underline{\textbf{33.9}} & 17.0/47.7 \\
  \Xhline{1.1pt}
  \end{tabular}%
  }
  \caption{Pass@1 / pass@256 across three models and six methods at $N{=}8$. Across all nine baseline/Focal pairs, Focal weighting improves Avg. Math and Avg. OOD pass@256 while preserving or improving Avg. Math pass@1. \textbf{Bold}: better within baseline/Focal pair. \underline{Underline}: statistically significant under paired subsampling ($p{<}0.05$; Appendix~\ref{app:statsign}).}
  \label{tab:all_models}
  \vspace{-1.5em}
  \end{table*}

\subsection{Single-Solution Maze Experiments}
\label{sec:maze}

We adapt the maze-navigation setup of \citet{maxrl} to single-solution scoring, using the shortest-path action sequence as the target for each prompt. We compare GRPO and F-GRPO with $\gamma\in\{0.5,1.0\}$ across $N\in\{4,8,16,32,64,128\}$ after SFT initialization; full details are in Appendix~\ref{app:maze}.

Appendix Figure~\ref{fig:maze_passk_vs_n} shows that F-GRPO improves final pass@1 and pass@$k$, and reduces the early training-time drop in pass@$k$. At final evaluation, F-GRPO with $\gamma{=}0.5$ improves pass@1 for every $N$ (74.4--93.6 vs.\ 65.6--75.8 for GRPO) and pass@256 (75.9--96.5 vs.\ 67.3--76.9). Because each prompt has a unique target sequence, the observed large-$k$ degradation reflects reduced target-sequence coverage at the evaluation-distribution level rather than loss of alternative correct trajectories within a prompt. Thus Maze separates the pass@$k$ effect from within-prompt multi-solution diversity and shows that the same weighting also mitigates degradation in a single-solution setting.

\subsection{LLM Experimental Setup}
\label{sec:exp_setup}

We train Qwen2.5-7B~\citep{qwen2.5}, Qwen3-4B-Base~\citep{qwen3}, and Llama-3.2-3B-Instruct~\citep{llama} on DeepScaleR~\citep{deepscaler}, and Qwen3-1.7B on SynLogic~\citep{synlogic}, using verl~\citep{verl}. For the math experiments, we report pass@1 and pass@256 on in-domain benchmarks (MATH500~\citep{math500}, AIME24/25~\citep{aime2425}, AMC23~\citep{amc23}, Minerva Math~\citep{minerva}, Olympiad Bench~\citep{olympiad}) and OOD benchmarks (GPQA Diamond~\citep{gpqa_diamond}, IFEval~\citep{ifeval}, SynLogic). For Qwen3-1.7B, SynLogic is the sole in-domain benchmark, and the OOD suite comprises AIME24, AIME25, AMC23, MATH500, IFEval, and GPQA Diamond. Full training, evaluation, and \(\gamma\)-selection details are in Appendix~\ref{app:training}; additional pass@$k$ tables and training dynamics are in Appendices~\ref{app:llm_passk_curves} and~\ref{app:llm_training_dynamics}.

\subsection{Group Size Regimes and Focal Weighting}
\label{sec:results:scaling}

\begin{table}[b!]
\centering
\small
\setlength{\tabcolsep}{3.2pt}
\newcommand{\pmsep}{\hspace{0.55mm}}
\begin{tabular}{lccc}
\toprule
\textbf{Method} & \textbf{Avg. Math} $\uparrow$ & \textbf{Avg. OOD} $\uparrow$ & $\Delta$\textbf{NLL}$_{\text{rare}}$ $\downarrow$ \\
\midrule
GRPO $N{=}2$ & 36.2\pmsep/\pmsep\textbf{75.0} & 18.0\pmsep/\pmsep\textbf{67.3} & \textbf{0.19} \\
GRPO $N{=}4$ & 36.4\pmsep/\pmsep{71.1} & 18.7\pmsep/\pmsep 59.7 & 0.44 \\
\rowcolor{gray!15}
GRPO $N{=}8$ & 37.3\pmsep/\pmsep 64.1 & 17.1\pmsep/\pmsep 55.9 & 0.68 \\
GRPO $N{=}16$ & 38.4\pmsep/\pmsep 67.5 & 17.7\pmsep/\pmsep 55.7 & 0.66 \\
GRPO $N{=}32$ & \textbf{39.2}\pmsep/\pmsep 70.1 & 17.7\pmsep/\pmsep 61.7 & 0.52 \\
\midrule
\rowcolor{gray!15}
F-GRPO $N{=}8$ & 38.6\pmsep/\pmsep 70.3 & \textbf{19.2}\pmsep/\pmsep 63.3 & 0.46 \\
\bottomrule
\end{tabular}
\caption{
Comparison of GRPO at varying group sizes versus F-GRPO at fixed $N{=}8$ on Qwen2.5-7B. Metrics: average pass@1 / pass@256 on in-domain math and OOD benchmarks. $\Delta$NLL$_{\text{rare}}$: diagnostic increase in negative log-likelihood on a fixed set of correct trajectories that were rare under the base model by NLL (lower $=$ less deviation from base distribution; see Appendix~\ref{app:nll_rare}). \textbf{Bold}: best. Gray rows mark the same group size $N{=}8$ comparison. Full per-benchmark results in Appendix~\ref{app:group_size_results}.
}
\label{tab:grpo_n_summary}
\end{table}

Having validated the local categorical mechanism and used Maze as a single-solution setting, we next examine empirical group-size trends in LLM training. Table~\ref{tab:grpo_n_summary} compares GRPO at $N \in \{2,4,8,16,32\}$ with F-GRPO at $N=8$ on Qwen2.5-7B. These group sizes are chosen to probe small, intermediate, and larger rollout regimes while keeping rollout cost tractable; we do not aim to exhaustively map performance as a function of $N$.

GRPO exhibits non-monotonic pass@256 behavior across group sizes: $N{=}2$ yields high pass@256 but lowest pass@1, a pattern consistent with slow policy change under infrequent active updates. Moving to intermediate group sizes improves math pass@1 but reduces pass@256; the degradation is strongest around $N{=}8$ on in-domain benchmarks and spans $N{=}8$-$16$ on OOD benchmarks. At $N{=}32$, pass@256 partially recovers while math pass@1 continues to improve. This pattern is qualitatively consistent with an intermediate concentration regime between small-$N$ inactivity and larger-$N$ coverage. The local theory does not predict the exact boundaries for LLM training, but it does motivate this qualitative comparison.

At $N{=}8$, F-GRPO matches GRPO at $N{=}32$ on pass@256 (70.3 vs.\ 70.1 on math; 63.3 vs.\ 61.7 on OOD) using $4{\times}$ fewer rollouts. Pass@1 shows a modest trade-off on in-domain benchmarks but improves on OOD tasks, suggesting that Focal weighting can recover much of the pass@256 loss observed under GRPO at intermediate $N$ without increasing the rollout budget in this setting.

\textbf{Deviation from Base-Model Rare Solutions.}
We also report $\Delta$NLL$_{\text{rare}}$, computed on correct trajectories that were rare under the base model by NLL. Higher values indicate greater deviation from the base distribution on these trajectories. The ordering $\Delta$NLL$_{\text{rare}}(N{=}2) < \Delta$NLL$_{\text{rare}}(N{=}32) < \Delta$NLL$_{\text{rare}}(N{=}8)$ is qualitatively consistent with the pass@256 degradation pattern, with F-GRPO at $N{=}8$ achieving an intermediate value (0.46), indicating less deviation on these base-model-rare correct trajectories than its baseline. Appendix~\ref{app:nll_rare} also compares the summed full-trajectory probability that GRPO and F-GRPO assign to fixed rare subsets.

\textbf{Correct-Solution Diversity.}
At matched $N{=}8$, F-GRPO has higher mean within-prompt embedding distance than GRPO on all six math benchmarks and higher GPT-5.1 mathematical-strategy diversity scores on all four judged benchmarks. More details are provided in Appendix~\ref{app:solution_diversity}.

\subsection{Focal Weighting Across Methods}
\label{sec:results:methods}

We now ask a more practical question: whether the same fixed-$N$ reweighting transfers across GRPO-family optimizers. Table~\ref{tab:all_models} reports results for GRPO, DAPO, and CISPO at $N{=}8$, a commonly used group size~\citep{grpo, rl_zoo, trick_part1}, across three models at different scales. On Qwen2.5-7B, pass@256 gains are $+6.2$ (GRPO), $+3.2$ (DAPO), and $+3.6$ (CISPO); corresponding gains are $+4.9$/$+0.3$/$+3.9$ on Qwen3-4B-Base and $+3.5$/$+8.1$/$+1.7$ on Llama-3.2-3B-Instruct. Across all nine method-model combinations, Focal weighting improves both math and OOD pass@256 (average $+3.9$ and $+4.1$). Math pass@1 is preserved or improved in 9/9 cases, with gains up to $+2.8$, and OOD pass@1 improves in 7/9 cases (average $+1.5$).

Across the nine math-trained settings, $\gamma{=}0.5$ was selected in 7/9 cases and $\gamma{=}1$ in 2/9. We recommend starting with $\gamma{=}0.5$; larger values may improve pass@256 at the cost of pass@1 (Appendix~\ref{app:gamma_sweep}).

In the SynLogic setup, Focal weighting also improves all four reported metrics for GRPO, DAPO, and CISPO. Full results are in Table~\ref{tab:synlogic_full}.

\begin{table}[h]
\centering
\small
\setlength{\tabcolsep}{4pt}
\begin{tabular}{lcc}
\toprule
\textbf{Method} & \textbf{SynLogic (ID)} & \textbf{Avg. OOD} \\
\midrule
GRPO & 40.1 / 74.8 & 31.7 / 60.6 \\
\rowcolor{gray!15}
F-GRPO & \textbf{41.8 / 79.4} & \textbf{35.8 / 63.9} \\
\midrule
DAPO & 57.1 / 86.3 & 38.7 / 67.1 \\
\rowcolor{gray!15}
F-DAPO & \textbf{58.6 / 87.2} & \textbf{39.1 / 69.4} \\
\midrule
CISPO & 45.5 / 78.5 & 36.2 / 63.4 \\
\rowcolor{gray!15}
F-CISPO & \textbf{51.0 / 85.2} & \textbf{37.4 / 64.2} \\
\bottomrule
\end{tabular}
\caption{Qwen3-1.7B trained on SynLogic at $N{=}8$. Metrics are pass@1 / pass@256. \textbf{Bold}: better within each baseline/Focal pair.}
\label{tab:synlogic_summary}
\end{table}

\subsection{Comparison with Regularization and Update-Scale Controls}
\label{sec:grpo_controls}

We include the base model as a reference and compare F-GRPO against GRPO with a lower learning rate (GRPO low-LR), GRPO with entropy bonus (GRPO-$\mathcal{H}$), GRPO with KL penalty (GRPO-KL), Rewarding the Unlikely (RUL; \citealp{rewarding_unlikely}), DARO~\citep{daro}, and differential smoothing (DS-GRPO; \citealp{diff_smooth_sharp}) using the Qwen2.5-7B setup. GRPO low-LR checks whether the average scale reduction alone is sufficient by matching the average advantage-magnitude scale induced by Focal weighting. For $\gamma=0.5$, this gives $\mathrm{lr}_{\mathrm{low}}=6.8{\times}10^{-7}$; coefficient tuning and the multiplier derivation are in Appendices~\ref{app:training} and~\ref{app:grpo_controls}.

\begin{table}[H]
\centering
\small
\setlength{\tabcolsep}{3.2pt}
\begin{tabular}{lcc}
\toprule
\textbf{Method} & \textbf{Avg. Math} $\uparrow$ & \textbf{Avg. OOD} $\uparrow$ \\
\midrule
Base & 10.9 / 70.5 & 9.3 / \textbf{70.7} \\
\midrule
GRPO low-LR & \utilde{37.8} / 69.2 & 16.4 / 57.9 \\
GRPO-$\mathcal{H}$ & \utilde{37.8} / 69.5 & 18.7 / 59.9 \\
GRPO-KL & 37.2 / \utilde{72.0} & \textbf{19.4} / 60.0 \\
\midrule
RUL & 37.1 / 69.1 & 18.1 / 61.7 \\
DARO & 37.3 / 69.4 & 18.8 / 63.3 \\
DS-GRPO & 37.7 / \textbf{73.8} & 17.9 / \utilde{68.3} \\
\rowcolor{gray!15}
F-GRPO & \textbf{38.6} / 70.3 & \utilde{19.2} / 63.3 \\
\bottomrule
\end{tabular}
\caption{Qwen2.5-7B comparison; trained methods use $N{=}8$. Metrics are average pass@1 / pass@256. \textbf{Bold}: best; $\dagger$: second best.}
\label{tab:controls_summary}
\end{table}

F-GRPO achieves the highest math pass@1 ($38.6$) and the second-highest OOD pass@1 ($19.2$, behind GRPO-KL at $19.4$, which incurs reference-model overhead). DS-GRPO has higher pass@256 than F-GRPO ($73.8$ vs.\ $70.3$ on math; $68.3$ vs.\ $63.3$ on OOD), but lower pass@1 ($37.7$ vs.\ $38.6$ on math; $17.9$ vs.\ $19.2$ on OOD). This is a pass@1/pass@256 trade-off rather than uniform dominance. GRPO low-LR improves pass@256 over default GRPO but remains below F-GRPO, suggesting that average scale reduction alone does not fully account for the gains. In this setup, the controls suggest that the gains are not explained solely by generic regularization or simple step-size reduction.

\section{Related Work}
\label{sec:related}

\textbf{Distribution Sharpening and Group Size.}
A growing body of work documents that RLVR improves pass@1 while degrading pass@$k$ for large $k$, indicating concentration onto fewer solutions~\citep{dang2025weight, rl_passk_incentivize, wu2025invisible}. The optimal rollout count remains debated: \citet{it_takes_2} show that $N{=}2$ is theoretically justified and compute-efficient, while \citet{brorl} advocate large groups for coverage. We provide a complementary perspective: finite-sampling effects in group-relative methods, including rare-mode undercoverage within prompts and success-based reweighting during training.

\textbf{Difficulty-Aware, Entropy, and Token-level Approaches.}
Related approaches include difficulty-aware reweighting~\citep{focal, curriculum, curriculum_rlvr, daro, your_gr_biased}, rare-trajectory rewards~\citep{rewarding_unlikely}, differential smoothing~\citep{diff_smooth_sharp}, entropy interventions~\citep{entrmech, cheng2025reasoning, agarwal2025unreasonable}, and token-level concentration methods~\citep{rethinking, simko, wang2025beyond}. Due to page-limit constraints, these approaches are discussed in Appendix~\ref{app:related_work}. Our focus here is on finite-group sampling in group-relative RLVR, its dependence on rollout count \(N\), and success-based weighting of whole sampled groups.

\section{Conclusion}
\label{sec:con}

This work studies finite-group RLVR at the prompt-local level where group-relative updates are actually formed. Our theoretical analysis derives a closed-form tail-miss probability exhibiting non-monotonic dependence on $N$: small groups limit active misses through inactivity, large groups through coverage, but intermediate $N$ maximize the probability of active misses. In the categorical abstraction, we further characterize redistribution within the correct set, showing that unsampled-correct mass can shrink even as total correct mass grows. Empirically, categorical simulation illustrates this mechanism in the categorical setting; a single-solution Maze setting shows that evaluation pass@$k$ can degrade across prompts even when each prompt has one correct trajectory; and a representative GRPO LLM sweep is qualitatively consistent with the same regime picture. Motivated by this analysis, Focal weighting provides a lightweight fixed-$N$ mitigation that improves pass@256 across GRPO, DAPO, and CISPO while preserving or improving pass@1 in most reported settings, with especially consistent OOD gains and no extra rollout cost.

\section*{Limitations}

Our theoretical analysis isolates one finite-sampling mechanism in group-relative RLVR with binary outcome rewards, namely that active groups can miss low-mass correct regions and then reallocate probability toward the sampled correct subset. The resulting claim is local, and the categorical simulation tests that local redistribution mechanism directly in a controlled setting. In this framing, the Maze experiments ask whether pass@$k$ degradation can arise even when each prompt has a single correct trajectory, and the LLM group-size sweep asks only whether qualitatively similar trends appear in practical training. F-GRPO is evaluated on that basis as a fixed-$N$ mitigation motivated by this mechanism, not as a general rule for RLVR concentration or group-size selection.

\bibliography{custom}

\clearpage
\appendix

\section*{Appendix Contents}
\begingroup
\small
\begin{enumerate}[label=\textbf{\Alph*.}, leftmargin=2.1em, itemsep=0pt, topsep=2pt]
\item \hyperref[app:related_work]{Additional related work}
\item \hyperref[app:objective_details]{Group-relative objective details}
\item \hyperref[app:categorical_prelim]{Categorical total-mass expression}
\item \hyperref[app:proof_tailmiss]{Proof of the tail-miss probability}
\item \hyperref[app:first_order]{First-order softmax expansion}
\item \hyperref[app:proof_upos]{Proof for unsampled-correct mass}
\item \hyperref[app:term_analysis]{Detailed term analysis}
\item \hyperref[app:proof_sr_monotone]{Monotonicity of sampled distinct mass}
\item \hyperref[app:focal_visualization]{Focal-weight visualization}
\item \hyperref[app:categorical_sim]{Categorical simulation details}
\item \hyperref[app:maze]{Maze experiment details}
\item \hyperref[app:training]{Experimental details, $\gamma$ sweep, and SynLogic results}
\item \hyperref[app:group_size_results]{Group-size, rare-trajectory, and diversity results}
\item \hyperref[app:grpo_controls]{Regularization and update-scale controls}
\item \hyperref[app:llm_passk_curves]{LLM pass@$k$ tables}
\item \hyperref[app:llm_training_dynamics]{LLM training dynamics}
\item \hyperref[app:statsign]{Statistical significance}
\item \hyperref[app:notation]{Notation}
\end{enumerate}
\endgroup

\section{Additional Related Work}
\label{app:related_work}

\textbf{Distribution Sharpening in RLVR.}
Several studies report that RLVR can improve single-sample accuracy while concentrating the output distribution and reducing performance at large sampling budgets~\citep{dang2025weight, rl_passk_incentivize, wu2025invisible}. \citet{chen2025rethinking} attribute this behavior to overconfidence induced by cross-entropy training and propose confidence limiting. Our analysis focuses on undercoverage of rare correct modes by finite rollout groups and the resulting redistribution toward sampled solutions.

\textbf{Group Size and Sampling Dynamics.}
Rollout count controls both how often group-relative updates are active and which behaviors they expose. \citet{it_takes_2} argue that $N{=}2$ is theoretically justified and compute-efficient, whereas \citet{brorl} show that scaling $N$ ensures non-negative change in total correct mass by improving coverage. Our analysis connects these endpoints: small groups can preserve coverage through inactivity, large groups through broad sampling, and intermediate groups can combine frequent updates with rare-mode misses. Concurrently, \citet{maxrl} derive likelihood-based estimators for correctness-based RL; Focal weighting provides group-level scaling for group-relative estimators.

\textbf{Difficulty-Aware Training.}
Reweighting by difficulty has established roots in Focal loss~\citep{focal} and curriculum learning~\citep{curriculum, curriculum_rlvr}. In RLVR, \citet{daro} dynamically rebalance loss contributions across difficulty groups, while \citet{rewarding_unlikely} identify rank bias in GRPO and up-weight unlikely correct trajectories. Concurrently, \citet{your_gr_biased} analyze difficulty-dependent bias in group-relative advantages and propose history-aware weighting. \citet{diff_smooth_sharp} study selection and reinforcement bias; their final Differential Smoothing method modifies rewards for both correct and incorrect trajectories. F-GRPO uses a fixed transformation of the current group's empirical success rate to scale the entire group-relative update.

\textbf{Entropy and Token-level Approaches.}
The role of entropy in RLVR remains debated, with some advocating maximization for exploration~\citep{entrmech, cheng2025reasoning} and others reporting benefits from minimization~\citep{agarwal2025unreasonable}. Several methods address token-level concentration by reweighting tokens based on entropy dynamics or probability structure~\citep{rethinking, simko, wang2025beyond}. These approaches regulate how probability mass is distributed \emph{within} trajectories; our Focal weighting instead regulates \emph{which prompts} contribute most strongly to group-relative updates.


\section{Group-Relative Objective Details}
\label{app:objective_details}

GRPO optimizes a clipped surrogate objective. Let $o_i = (y_{i,1}, \ldots, y_{i,T_i})$ denote the token sequence for rollout $i$, with importance ratio $r_{i,t}(\theta) = \frac{\pi_\theta(y_{i,t} \mid x, y_{i,<t})}{\pi_{\theta_{\mathrm{old}}}(y_{i,t} \mid x, y_{i,<t})}$. The GRPO objective is

\noindent
\begin{equation}
  \begin{split}
  \mathcal{L}_{\mathrm{GRPO}}(\theta)
  =
  \mathbb{E}_{x}\Bigl[
  \frac{1}{N} \sum_{i=1}^N \frac{1}{T_i} \sum_{t=1}^{T_i} L_{i,t}^{\mathrm{clip}}
  \\
  \qquad
  -\beta\,\mathbb{D}_{\mathrm{KL}}\left(\pi_\theta \,\|\, \pi_{\mathrm{ref}}\right)
  \Bigr]
  \end{split}
  \label{eq:grpo_obj}
\end{equation}

where $L_{i,t}^{\mathrm{clip}} = \min\big(r_{i,t}\widehat{A}_i,\ \mathrm{clip}(r_{i,t}, 1{-}\varepsilon, 1{+}\varepsilon)\widehat{A}_i\big)$.
We set $\beta=0$ following DAPO~\citep{dapo}. DAPO modifies this with asymmetric clipping bounds $\mathrm{clip}(r_{i,t}, 1{-}\varepsilon_{\mathrm{low}}, 1{+}\varepsilon_{\mathrm{high}})$ where $\varepsilon_{\mathrm{high}} > \varepsilon_{\mathrm{low}}$, relaxing the upper bound for low-probability actions.
Our DAPO and F-DAPO runs use this Clip-Higher objective under the same fixed-$N{=}8$ sampling protocol as the other method pairs, without dynamic group filtering.

CISPO~\citep{cispo} clips the importance weights directly rather than the surrogate product. Define the clipped weight

\begin{equation}
\widehat{r}_{i,t} = \mathrm{clip}\big(r_{i,t}, 1{-}\varepsilon_{\mathrm{low}}^{\mathrm{IS}}, 1{+}\varepsilon_{\mathrm{high}}^{\mathrm{IS}}\big),
\label{eq:cispo_rhat}
\end{equation}

and optimizes a REINFORCE-style objective

\begin{equation}
  \begin{split}
  \mathcal{L}_{\mathrm{CISPO}}(\theta)
  &=
  \mathbb{E}_{i,t}\Bigl[
  \mathrm{sg}(\widehat{r}_{i,t})\, \widehat{A}_i^{\mathrm{GRPO}}
  \\
  &\qquad\cdot \log \pi_\theta(y_{i,t} \mid x, y_{i,<t})
  \Bigr]
  \end{split}
  \label{eq:cispo_obj}
\end{equation}
where $\mathrm{sg}(\cdot)$ denotes stop-gradient.

\begin{figure*}[t]
  \centering
  \includegraphics[width=\textwidth]{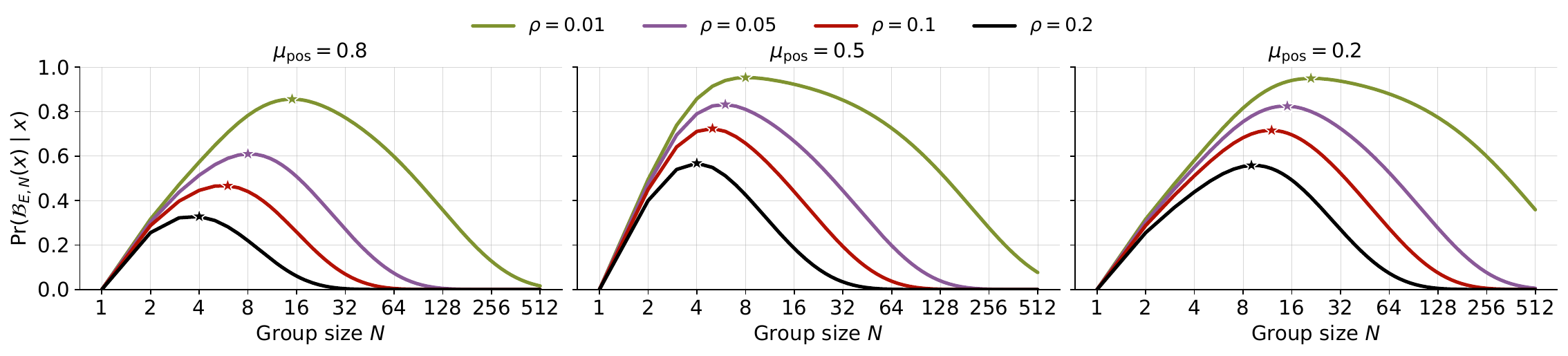}
  \caption{
  Conditional tail-miss probability $\Pr(\mathcal{B}_{E,N}(x)\mid x)$ from Lemma~\ref{lem:P_Btau} versus group size $N$ for rare-mode undercoverage. Each panel fixes $\mu_{\mathrm{pos}}(x) \in \{0.8, 0.5, 0.2\}$; curves vary $\rho = \tau_E(x)/\mu_{\mathrm{pos}}(x)$, the fraction of correct mass in the rare-correct region. Stars mark peaks. For all parameter combinations, the conditional probability peaks at intermediate $N$: small $N$ yields low activity, large $N$ yields good coverage, but moderate $N$ combines active groups with poor coverage of rare modes. Smaller $\rho$ shifts the peak rightward and upward.
  }
  \label{fig:tailmiss_detailed}
\end{figure*}

\section{Categorical Policy Total-Mass Expression}
\label{app:categorical_prelim}

From this update rule, \citet{brorl} derive the one-step change in total correct mass:

\noindent
\begin{equation}
  \begin{aligned}
  \Delta &Q_{\mathrm{pos}}
  = \frac{\eta}{N} \Bigl[
  (R_c - S_R) Q_{\mathrm{neg}} S_{+,2} \\
  &\qquad + (S_R - R_w) Q_{\mathrm{pos}} S_{-,2} \\
  &\qquad + S_R \bigl(
  Q_{\mathrm{pos}} U_{\mathrm{neg},2}
  - Q_{\mathrm{neg}} U_{\mathrm{pos},2}
  \bigr)
  \Bigr].
  \end{aligned}
  \label{eq:delta_Qpos}
\end{equation}

The first two terms are always non-negative: promoting sampled correct actions and demoting sampled incorrect actions both transfer mass to the correct pool. The third term, the unsampled coupling, can be positive or negative depending on $S_R$ and the relative concentration of unsampled masses. As the unsampled second moments decay with $N$, increasing rollout size drives this coupling toward zero.

\section{Proof of Lemma~\ref{lem:P_Btau}}
\label{app:proof_tailmiss}

\begin{proof}
Fix a prompt $x$ and a target subset $E_x\subseteq\mathcal{C}(x)$. Omit $(x)$ for readability and write $\mathcal{C}$ for $\mathcal{C}(x)$ and $E$ for $E_x$. Each rollout falls into one of three disjoint regions:
the target correct subset $E$ with probability $\tau$,
the remaining correct region $\mathcal{C}\setminus E$ with probability $\mu_{\mathrm{pos}}-\tau$,
or the incorrect region $\Omega\setminus \mathcal{C}$ with probability $1-\mu_{\mathrm{pos}}$.

The probability that no rollout lies in $E$ is $(1-\tau)^N$. Conditioned on this event, all rollouts lie in $(\mathcal{C}\setminus E)\cup(\Omega\setminus \mathcal{C})$.
The group is inactive (hence $\mathcal{B}_{E,N}(x)$ does not occur) in two disjoint cases:
all rollouts are correct but not in $E$, with probability $(\mu_{\mathrm{pos}}-\tau)^N$; or all rollouts are incorrect, with probability $(1-\mu_{\mathrm{pos}})^N$.
Thus
\begin{align*}
  \Pr(\mathcal{B}_{E,N}(x)\mid x)
  &= (1-\tau)^N - (\mu_{\mathrm{pos}}-\tau)^N \\
  &\qquad - (1-\mu_{\mathrm{pos}})^N. \qedhere
\end{align*}

\end{proof}

\section{First-order Softmax Expansion and Subset-mass Identity}
\label{app:first_order}

This appendix records standard first-order identities for the softmax map that underlie the analysis in Section~\ref{sec:theory}.

Let $p = \mathrm{softmax}(z)$ over $\mathcal{A}$ and consider a small logit perturbation $\Delta z$. The softmax Jacobian $\frac{\partial p_i}{\partial z_j} = p_i(\mathbf{1}\{i = j\} - p_j)$ implies the first-order probability change
\begin{equation}
\Delta p_i = \sum_{j \in \mathcal{A}} \frac{\partial p_i}{\partial z_j} \Delta z_j = p_i \Big( \Delta z_i - \sum_{j \in \mathcal{A}} p_j \Delta z_j \Big).
\label{eq:app_dp}
\end{equation}

For any subset $\mathcal{S} \subseteq \mathcal{A}$, define its probability mass $Q_{\mathcal{S}} := \sum_{i \in \mathcal{S}} p_i$. Summing~\eqref{eq:app_dp} over $i \in \mathcal{S}$ yields the \emph{subset-mass identity}:
\begin{equation}
\Delta Q_{\mathcal{S}} := \sum_{i \in \mathcal{S}} \Delta p_i = \sum_{i \in \mathcal{S}} p_i \Delta z_i - Q_{\mathcal{S}} \sum_{j \in \mathcal{A}} p_j \Delta z_j.
\label{eq:app_subset_mass}
\end{equation}

The first term captures the direct effect of logit changes on actions in $\mathcal{S}$, while the second term captures the indirect effect through softmax normalization: when probability mass moves elsewhere, $Q_{\mathcal{S}}$ changes even if the logits of actions in $\mathcal{S}$ are unchanged.

\textbf{Application to $\Delta Q_{\mathrm{pos}}$.} Setting $\mathcal{S} = \mathcal{P}$ and using the one-step update~\eqref{eq:dz_update} recovers the mass balance equation~\eqref{eq:delta_Qpos} of \citep{brorl}.

\textbf{Application to $\Delta Q_{\mathrm{u,pos}}$.} Setting $\mathcal{S} = U \cap \mathcal{P}$ (unsampled correct actions) yields Proposition~\ref{prop:delta_upos}. The key observation is that for $i \in U$, we have $R_i = 0$, so $\Delta z_i = -\frac{\eta}{N} S_R p_i$ from~\eqref{eq:dz_update}.

\section{Proof of Proposition~\ref{prop:delta_upos}}
\label{app:proof_upos}

\begin{proof}
Apply the subset-mass identity (Appendix~\ref{app:first_order}, Eq.~\eqref{eq:app_subset_mass}) with $\mathcal{S} = U \cap \mathcal{P}$:
\begin{equation*}
\Delta Q_{\mathrm{u,pos}} = \sum_{i \in U \cap \mathcal{P}} p_i \Delta z_i - Q_{\mathrm{u,pos}} \sum_{j \in \mathcal{A}} p_j \Delta z_j.
\end{equation*}

For $i \in U \cap \mathcal{P}$, we have $R_i = 0$, so by~\eqref{eq:dz_update}, $\Delta z_i = -\frac{\eta}{N} S_R p_i$. Thus the first sum becomes

\begin{equation*}
\begin{aligned}
\sum_{i \in U \cap \mathcal{P}} p_i \Delta z_i
&= -\frac{\eta}{N} S_R
   \sum_{i \in U \cap \mathcal{P}} p_i^2 \\
&= -\frac{\eta}{N} S_R\, U_{\mathrm{pos},2}.
\end{aligned}
\end{equation*}

For the normalization term, partitioning by reward value:
\begin{align*}
&\sum_{j \in \mathcal{A}} p_j \Delta z_j 
= \frac{\eta}{N} \sum_{j \in \mathcal{A}} p_j^2 (R_j - S_R) \nonumber = \\
&\frac{\eta}{N} \Big[ (R_c - S_R) S_{+,2} + (R_w - S_R) S_{-,2} - S_R U_2 \Big],
\end{align*}

where we used $R_j = R_c$ for $j \in S_+$, $R_j = R_w$ for $j \in S_-$, and $R_j = 0$ for $j \in U$. Substituting both expressions yields~\eqref{eq:delta_Qupos}.
\end{proof}

\section{Detailed Term Analysis for Proposition~\ref{prop:delta_upos}}
\label{app:term_analysis}

We analyze each term in~\eqref{eq:delta_Qupos} to understand when unsampled-correct mass decreases.

\textbf{Direct drift term.} The term $-S_R\, U_{\mathrm{pos},2}$ arises because unsampled actions receive zero reward but are still affected by the baseline subtraction. When $S_R > 0$ (reward-positive batch), this term is negative and pushes unsampled-correct mass downward. The magnitude scales with $U_{\mathrm{pos},2}$, the concentration of unsampled-correct probability.

\textbf{Normalization coupling.} The second term couples $Q_{\mathrm{u,pos}}$ to the mass changes elsewhere. The factor in parentheses has three components:
\begin{itemize}[leftmargin=*, itemsep=3pt, topsep=3pt]
\item $(R_c - S_R) S_{+,2} \geq 0$: sampled-correct actions gain probability, which through normalization draws mass away from unsampled-correct actions.
\item $(R_w - S_R) S_{-,2} \leq 0$: sampled-incorrect actions lose probability, which through normalization donates mass to all other actions including unsampled-correct ones.
\item $-S_R U_2$: when $S_R > 0$, unsampled actions (both correct and incorrect) lose probability through baseline subtraction.
\end{itemize}

\textbf{When does $\Delta Q_{\mathrm{u,pos}} < 0$ while $\Delta Q_{\mathrm{pos}} > 0$?} Consider a reward-positive batch ($S_R > 0$) on a prompt with high success probability. In this regime:
\begin{itemize}[leftmargin=*, itemsep=3pt, topsep=3pt]
\item The direct drift $-S_R U_{\mathrm{pos},2} < 0$ actively pushes unsampled-correct mass down.
\item The normalization coupling is dominated by $(R_c - S_R) S_{+,2} > 0$ when sampled-correct mass is concentrated, further draining unsampled-correct mass.
\item Meanwhile, $\Delta Q_{\mathrm{pos}}$ from~\eqref{eq:delta_Qpos} remains positive because its first two terms (mass transfer from incorrect to correct pool) outweigh the unsampled coupling.
\end{itemize}
Thus RLVR can increase total correct mass while concentrating it onto the sampled-correct subset, shrinking the probability of correct actions that happen not to be sampled.

\section{Monotonicity of Sampled Distinct Mass Conditioned on $X$}
\label{app:proof_sr_monotone}

This appendix formalizes the monotonicity claim used in Section~\ref{sec:method} (Focal Weight): as the observed correct count \(X\) increases, the expected distinct sampled-correct mass is non-decreasing. As a corollary, the corresponding categorical baseline \(S_R\) is also monotone in \(X\).

\paragraph{Setup.}
Fix a prompt \(x\) and write \(\pi(o) := \pi_\theta(o\mid x)\) for brevity. Let \(\Omega_x\) be the rollout space and \(\mathcal{C} := \mathcal{C}(x) \subseteq \Omega_x\) the set of correct rollouts (Section~\ref{sec:prelim:rlvr}). Sample \(N\) i.i.d.\ rollouts \(o_1,\ldots,o_N \sim \pi(\cdot)\), and let \(X := \sum_{i=1}^N \mathbb{I}[o_i \in \mathcal{C}]\) be the number of correct rollouts.

Define the \emph{distinct sampled sets}
\[
S_+ := \{\, o_i : o_i \in \mathcal{C}\,\}, \qquad
S_- := \{\, o_i : o_i \notin \mathcal{C}\,\},
\]
where braces denote a set (duplicates removed). Define the corresponding sampled masses
\[
P_{\mathrm{pos}} := \sum_{o \in S_+} \pi(o), \qquad
P_{\mathrm{neg}} := \sum_{o \in S_-} \pi(o).
\]
These are the trajectory-level analogues of the categorical quantities in Section~\ref{sec:prelim:categorical}. As in that section, define
\begin{equation}
S_R := R_c\, P_{\mathrm{pos}} + R_w\, P_{\mathrm{neg}}.
\label{eq:app_SR_def_traj}
\end{equation}

\paragraph{Conditional Distributions.}
Let \(\mu_{\mathrm{pos}} := \Pr_{o\sim \pi}[o\in \mathcal{C}]\). For \(o \in \mathcal{C}\), define the conditional (restricted) distribution
\[
q_{\mathrm{pos}}(o) := \Pr[o_i = o \mid o_i \in \mathcal{C}] = \frac{\pi(o)}{\mu_{\mathrm{pos}}}.
\]
Similarly, for \(o \notin \mathcal{C}\), define
\[
q_{\mathrm{neg}}(o) := \Pr[o_i = o \mid o_i \notin \mathcal{C}] = \frac{\pi(o)}{1-\mu_{\mathrm{pos}}}.
\]
By exchangeability of i.i.d.\ sampling, conditioning on \(X=k\) implies that the \(k\) correct rollouts are i.i.d.\ from \(q_{\mathrm{pos}}\) over \(\mathcal{C}\), and the \(N-k\) incorrect rollouts are i.i.d.\ from \(q_{\mathrm{neg}}\) over \(\Omega_x \setminus \mathcal{C}\).

\begin{lemma}
\label{lem:app_Ppos_Pneg_monotone}
For all integers \(k \in \{0,1,\ldots,N\}\),
\begin{align}
  \mathbb{E}\!\left[P_{\mathrm{pos}} \mid X=k\right]
  &= \sum_{o \in \mathcal{C}} \pi(o)
  \notag\\[-0.3em]
  &\quad{}\times
  \Bigl(1 - (1-q_{\mathrm{pos}}(o))^{k}\Bigr),
  \label{eq:app_E_Ppos_given_X}
  \\[0.4em]
  \mathbb{E}\!\left[P_{\mathrm{neg}} \mid X=k\right]
  &= \sum_{o \notin \mathcal{C}} \pi(o)
  \notag\\[-0.3em]
  &\quad{}\times
  \Bigl(1 - (1-q_{\mathrm{neg}}(o))^{N-k}\Bigr).
  \label{eq:app_E_Pneg_given_X}
\end{align}
Moreover, \(\mathbb{E}[P_{\mathrm{pos}} \mid X=k]\) is non-decreasing in \(k\), and \(\mathbb{E}[P_{\mathrm{neg}} \mid X=k]\) is non-increasing in \(k\).
\end{lemma}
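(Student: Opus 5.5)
The plan is to write the distinct sampled masses as sums of indicator-weighted probabilities and to use linearity of expectation together with the conditional i.i.d.\ structure stated just before the lemma. Concretely,
\[
P_{\mathrm{pos}} = \sum_{o\in\mathcal{C}} \pi(o)\,\mathbb{I}[o\in S_+], \qquad P_{\mathrm{neg}} = \sum_{o\notin\mathcal{C}} \pi(o)\,\mathbb{I}[o\in S_-].
\]
Taking conditional expectations given \(X=k\) reduces the problem to computing \(\Pr(o\in S_+\mid X=k)\) for each fixed \(o\in\mathcal{C}\), and similarly for \(o\notin\mathcal{C}\). Interchanging the (possibly countable) sum with the expectation is justified by Tonelli, since all terms are non-negative.

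For the key computation, I will invoke the exchangeability remark. Conditioned on \(X=k\), the \(k\) correct rollouts are i.i.d.\ from \(q_{\mathrm{pos}}\), so the event \(o\notin S_+\) is exactly the event that none of these \(k\) draws equals \(o\). This gives
\[
\Pr(o\in S_+\mid X=k)=1-(1-q_{\mathrm{pos}}(o))^{k}.
\]
In the same way, the \(N-k\) incorrect draws are i.i.d.\ from \(q_{\mathrm{neg}}\), which gives
\[
\Pr(o\in S_-\mid X=k)=1-(1-q_{\mathrm{neg}}(o))^{N-k}.
\]
Substituting these into the linear expressions yields \eqref{eq:app_E_Ppos_given_X} and \eqref{eq:app_E_Pneg_given_X}. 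The edge cases \(k=0\) and \(k=N\) give zero contributions, which is consistent with \(S_+=\emptyset\) or \(S_-=\emptyset\) respectively. Where \(\mu_{\mathrm{pos}}\in\{0,1\}\), one conditional distribution is undefined, but the corresponding sum is then empty or the conditioning event has probability zero. I will note this convention briefly and restrict to \(k\) with \(\Pr(X=k)>0\).

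For monotonicity, I will argue termwise. Since \(0\le 1-q_{\mathrm{pos}}(o)\le 1\), the power \((1-q_{\mathrm{pos}}(o))^k\) is non-increasing in \(k\), so each summand \(\pi(o)\bigl(1-(1-q_{\mathrm{pos}}(o))^k\bigr)\) is non-decreasing in \(k\), and hence so is their non-negative sum. For the negative mass, the exponent \(N-k\) decreases as \(k\) grows, so each summand is non-increasing and the sum is too.

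The main obstacle is not computational. It is making the exchangeability step rigorous: the claim that, given \(X=k\), the correct draws are i.i.d.\ \(q_{\mathrm{pos}}\) independently of which indices they occupy. If needed, I will verify this directly by conditioning on the index set \(I=\{i: o_i\in\mathcal{C}\}\) with \(|I|=k\). The joint law of \((o_i)_{i\in I}\) then factorizes as \(\prod_{i\in I}\pi(o_i)/\mu_{\mathrm{pos}}\). Averaging over \(I\) preserves this law, and the event \(o\notin S_+\) does not depend on \(I\).
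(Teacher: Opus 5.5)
Your proposal is correct and follows essentially the same route as the paper. Both write $P_{\mathrm{pos}}$ and $P_{\mathrm{neg}}$ as indicator-weighted sums, apply linearity of expectation, use the conditional i.i.d.\ structure given $X=k$ to obtain the inclusion probabilities $1-(1-q_{\mathrm{pos}}(o))^{k}$ and $1-(1-q_{\mathrm{neg}}(o))^{N-k}$, and then argue monotonicity term by term. For that last step the paper computes the nonnegative discrete difference $\sum_{o\in\mathcal{C}}\pi(o)(1-q_{\mathrm{pos}}(o))^{k}q_{\mathrm{pos}}(o)$, which is the same point as your observation that $(1-q_{\mathrm{pos}}(o))^{k}$ is non-increasing in $k$. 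Your extra care (Tonelli for countable sums, the degenerate cases $\mu_{\mathrm{pos}}\in\{0,1\}$, and justifying exchangeability by conditioning on the index set of correct draws) goes beyond what the paper writes but does not change the argument.
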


\begin{proof}
We prove the statement for \(P_{\mathrm{pos}}\); the argument for \(P_{\mathrm{neg}}\) is identical with \(N-k\) in place of \(k\).

Condition on \(X=k\). For any fixed \(o \in \mathcal{C}\), the event \(\{o \in S_+\}\) is exactly the event that \(o\) appears at least once among the \(k\) correct i.i.d.\ draws from \(q_{\mathrm{pos}}\). Thus
\[
\Pr(o \in S_+ \mid X=k) = 1 - (1-q_{\mathrm{pos}}(o))^{k}.
\]
Using linearity of expectation and the definition \(P_{\mathrm{pos}}=\sum_{o\in\mathcal{C}} \pi(o)\,\mathbb{I}\{o\in S_+\}\),
\begin{align*}
\mathbb{E}[P_{\mathrm{pos}} \mid X=k] = \sum_{o \in \mathcal{C}} \pi(o)\, \Pr(o \in S_+ \mid X=k)
=\\
\sum_{o \in \mathcal{C}} \pi(o)\,\Big(1 - (1-q_{\mathrm{pos}}(o))^{k}\Big),
\end{align*}
which is~\eqref{eq:app_E_Ppos_given_X}.

To show monotonicity, compute the discrete difference:
\begin{align*}
\mathbb{E}[P_{\mathrm{pos}} \mid X=k{+}1] - \mathbb{E}[P_{\mathrm{pos}} \mid X=k]
=\\
\sum_{o \in \mathcal{C}} \pi(o)\, (1-q_{\mathrm{pos}}(o))^{k}\, q_{\mathrm{pos}}(o)
\;\ge\; 0,
\end{align*}
so \(\mathbb{E}[P_{\mathrm{pos}} \mid X=k]\) is non-decreasing in \(k\).

For \(P_{\mathrm{neg}}\), conditioned on \(X=k\), each \(o\notin\mathcal{C}\) is included in \(S_-\) with probability \(1-(1-q_{\mathrm{neg}}(o))^{N-k}\). This yields~\eqref{eq:app_E_Pneg_given_X}. Since \(N-k\) decreases as \(k\) increases and \(m \mapsto 1-(1-q)^m\) is non-decreasing in \(m\), it follows that \(\mathbb{E}[P_{\mathrm{neg}} \mid X=k]\) is non-increasing in \(k\).
\end{proof}

\begin{corollary}
\label{cor:app_SR_monotone}
Assume standard RLVR rewards \(R_c > R_w\) and \(R_w \le 0\) (Section~\ref{sec:prelim:rlvr}). Then \(\mathbb{E}[S_R \mid X=k]\) is non-decreasing in \(k\).
\end{corollary}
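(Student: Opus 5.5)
The plan is to apply linearity of conditional expectation to the definition \eqref{eq:app_SR_def_traj}, which gives
\[
\mathbb{E}[S_R\mid X=k] = R_c\,\mathbb{E}[P_{\mathrm{pos}}\mid X=k] + R_w\,\mathbb{E}[P_{\mathrm{neg}}\mid X=k],
\]
and then to read off the monotonicity of each summand from Lemma~\ref{lem:app_Ppos_Pneg_monotone}. I will not need the explicit formulas \eqref{eq:app_E_Ppos_given_X}--\eqref{eq:app_E_Pneg_given_X}; only the monotonicity conclusions of that lemma enter.

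For the first summand, Lemma~\ref{lem:app_Ppos_Pneg_monotone} says that $\mathbb{E}[P_{\mathrm{pos}}\mid X=k]$ is non-decreasing in $k$. Multiplying by $R_c$ keeps it non-decreasing provided $R_c\ge 0$. This holds under the stated assumptions whenever $R_c\ge0$, as in the typical choice $R_c=1$. For the second summand, the same lemma says that $\mathbb{E}[P_{\mathrm{neg}}\mid X=k]$ is non-increasing in $k$. Because $R_w\le 0$, multiplying by $R_w$ reverses this, so $R_w\,\mathbb{E}[P_{\mathrm{neg}}\mid X=k]$ is non-decreasing. A sum of two non-decreasing sequences in $k$ is non-decreasing, which proves the corollary.

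The main obstacle is the sign of $R_c$. The hypotheses $R_c>R_w$ and $R_w\le0$ do not by themselves force $R_c\ge0$. I would handle this in one of two ways:
\begin{itemize}
\item note that in the standard RLVR setup of Section~\ref{sec:prelim:rlvr} we have $R_c=1>0$, and state that this is the assumption used; or
\item treat the case $R_c<0$ separately. There $R_w<R_c<0$, and the monotonicity can genuinely fail. Nonetheless, the term involving the positive weight on $P_{\mathrm{pos}}$ is the one the paper uses, so in this case I would fold the corollary into the assumption $R_c>0$.
\end{itemize}
Every other step is routine.
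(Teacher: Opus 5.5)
Your proof is correct and follows the paper's argument: expand $\mathbb{E}[S_R\mid X=k]=R_c\,\mathbb{E}[P_{\mathrm{pos}}\mid X=k]+R_w\,\mathbb{E}[P_{\mathrm{neg}}\mid X=k]$ by linearity, then combine the two monotonicity conclusions of Lemma~\ref{lem:app_Ppos_Pneg_monotone} with the signs of $R_c$ and $R_w$. Your caveat is also well taken: the paper's proof simply asserts $R_c>0$, which does not follow from $R_c>R_w$ and $R_w\le 0$, and the claim does fail for $R_w<R_c<0$. To see this, take a single incorrect outcome and at least two correct ones with $N\ge 2$; then $\mathbb{E}[P_{\mathrm{neg}}\mid X=k]$ is constant for $k<N$ while $\mathbb{E}[P_{\mathrm{pos}}\mid X=k]$ strictly increases, so the right extra hypothesis is $R_c\ge 0$.
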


\begin{proof}
By definition~\eqref{eq:app_SR_def_traj} and linearity of expectation,
\begin{align*}
\mathbb{E}[S_R \mid X=k] = R_c\,\mathbb{E}[P_{\mathrm{pos}} \mid X=k] \\
+ R_w\,\mathbb{E}[P_{\mathrm{neg}} \mid X=k].
\end{align*}
By Lemma~\ref{lem:app_Ppos_Pneg_monotone}, the first term is non-decreasing in \(k\) because \(R_c>0\), and the second term is also non-decreasing in \(k\) because \(R_w \le 0\) and \(\mathbb{E}[P_{\mathrm{neg}} \mid X=k]\) is non-increasing in \(k\). Hence their sum is non-decreasing in \(k\).
\end{proof}

\section{Focal Weight Visualization}
\label{app:focal_visualization}

Figure~\ref{fig:advantage_magnitude} visualizes the effect of Focal weighting. With binary rewards, the GRPO advantage magnitudes vary with $\mu_{\mathrm{pos}}(x)$. The Focal weight $g(x) = (1 - \widehat{\mu}_{\mathrm{pos}}(x))^\gamma$ scales these magnitudes, attenuating high-success groups more strongly than mixed low-success ones.

\begin{figure}[H]
\includegraphics[width=0.44\textwidth]{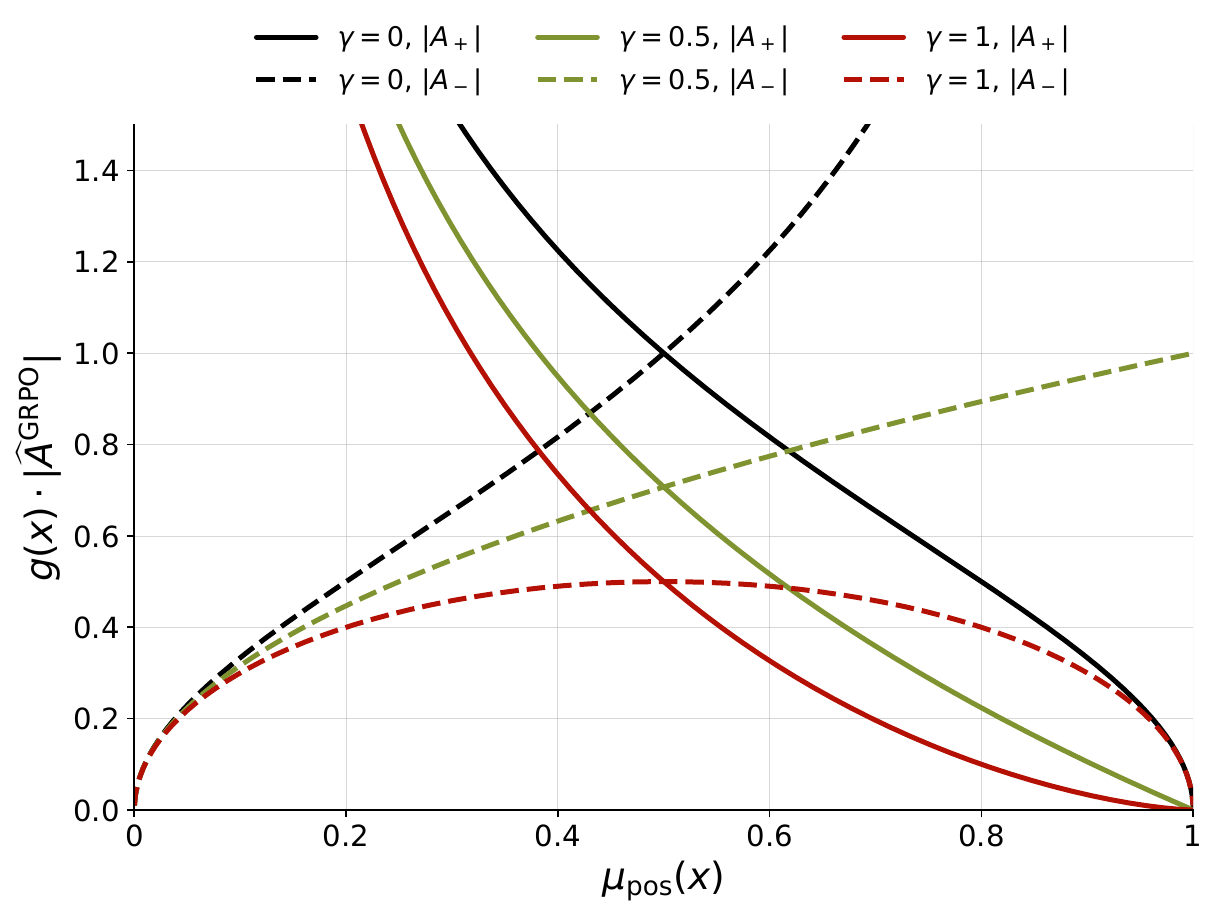}
\caption{
Scaled advantage magnitude $g(x) \cdot |\widehat{A}^{\mathrm{GRPO}}|$ versus success probability $\mu_{\mathrm{pos}}(x)$ for binary rewards. Solid lines: correct rollouts; dashed lines: incorrect rollouts. Higher \(\gamma\) attenuates high empirical-success groups more strongly, thereby changing the relative weighting of active updates.
}
\label{fig:advantage_magnitude}
\end{figure}

\section{Categorical Simulation Details}
\label{app:categorical_sim}

We validate the theoretical framework using a categorical policy simulation. To enable direct comparison with prior work, we adopt the setup of \citet{brorl} with one modification to the learning rate, as described below.

The policy is a softmax distribution over $|\mathcal{A}|=128{,}000$ actions. A subset $\mathcal{A}^+$ of $10{,}000$ actions is designated as correct with reward $R=+1$; the remaining $118{,}000$ actions receive $R=-1$. Following \citet{brorl}, logits are initialized as: one ``anchor'' correct action receives $z_{\mathrm{anchor}}=5.0$; all other correct actions receive $z=3.0$; incorrect actions receive $z=0.0$. Under softmax temperature $1$, this yields initial total correct mass $Q_{\mathrm{pos}} \approx 0.63$, anchor probability $p_{\mathrm{anchor}} \approx 4.7 \times 10^{-4}$, and probability $\tau_{\mathrm{leaf}} \approx 6.3 \times 10^{-5}$ for each non-anchor correct action.

Given this initial distribution, we can compute the conditional tail-miss probability from Lemma~\ref{lem:P_Btau} for a typical non-anchor correct action with $\tau_E = \tau_{\mathrm{leaf}} \approx 6.3 \times 10^{-5}$. Figure~\ref{fig:btau_vs_n} shows $\Pr(\mathcal{B}_{E,N}(x)\mid x)$ as a function of group size $N$. The probability rises steeply for small $N$, plateaus near $1$ for intermediate values, and only declines toward zero for $N \gtrsim 2^{15}$. At $N = 2^{17} = 131{,}072$, $\Pr(\mathcal{B}_{E,N}(x)\mid x) < 10^{-3}$, predicting that such a group size should preserve probability mass on non-anchor correct actions. This prediction aligns with the simulation results in Figure~\ref{fig:categorical_sim}: $N{=}131{,}072$ is the only configuration that maintains $\mathcal{M}_{\mathrm{ret}} \approx 1$ throughout training.

\begin{figure}[H]
\centering
\includegraphics[width=0.5\textwidth]{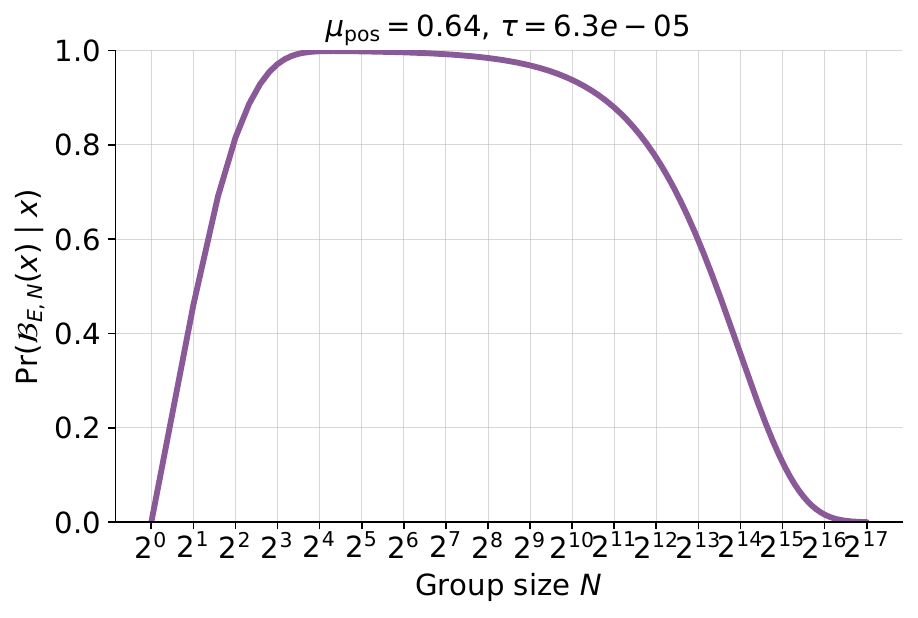}
\caption{Pointwise tail-miss probability $\Pr(\mathcal{B}_{E,N}(x)\mid x)$ versus group size $N$ for $\mu_{\mathrm{pos}} = 0.64$ and $\tau_E = 6.3 \times 10^{-5}$, corresponding to a non-anchor correct action in the simulation. The non-monotonic shape explains the concentration zone: intermediate $N$ maximizes the probability that a correct action is unsampled while the batch contains mixed rewards.}
\label{fig:btau_vs_n}
\end{figure}

At each training step, we sample $N$ actions i.i.d.\ from the current policy, compute group-relative advantages $\tilde{r}_j = R_j - \frac{1}{N}\sum_{k} R_k$, and update logits via gradient ascent on $\mathcal{L} = \frac{1}{N}\sum_j \tilde{r}_j p_j$. When Focal weighting is applied, the objective is scaled by $g = (1 - \widehat{\mu}_{\mathrm{pos}})^\gamma$. We use learning rate $\eta = 10^{-2}$, which differs from $\eta = 10^{-3}$ in \citet{brorl}. At the lower learning rate, policy entropy after $1{,}000$ steps remained above $4$ even for $N{=}65{,}536$, whereas LLM generation entropy during RLVR training is typically below $1$. At the original learning rate, the same qualitative concentration pattern, again mitigated by Focal weighting, appears over $50{,}000$ steps.

We sweep $N \in \{2, 4, \ldots, 131{,}072\}$ and $\gamma \in \{0, 1\}$, running $T{=}1{,}000$ steps per configuration. Results are averaged over $4$ random seeds.

\textbf{Metrics.} We track total correct mass $Q_{\mathrm{pos}}(t) = \sum_{a \in \mathcal{A}^+} \pi_t(a)$ and retained positive mass:
\begin{equation}
\mathcal{M}_{\mathrm{ret}}(t) = 1 - \frac{\sum_{a \in \mathcal{A}^+} \max\bigl(0,\, \pi_0(a) - \pi_t(a)\bigr)}{\sum_{a \in \mathcal{A}^+} \pi_0(a)}.
\label{eq:retained_mass}
\end{equation}
$\mathcal{M}_{\mathrm{ret}}{=}1$ indicates no correct action has lost mass; $\mathcal{M}_{\mathrm{ret}}{\approx}0$ indicates concentration onto a smaller subset.

\begin{figure*}[h!]
  \centering
  \includegraphics[width=\textwidth]{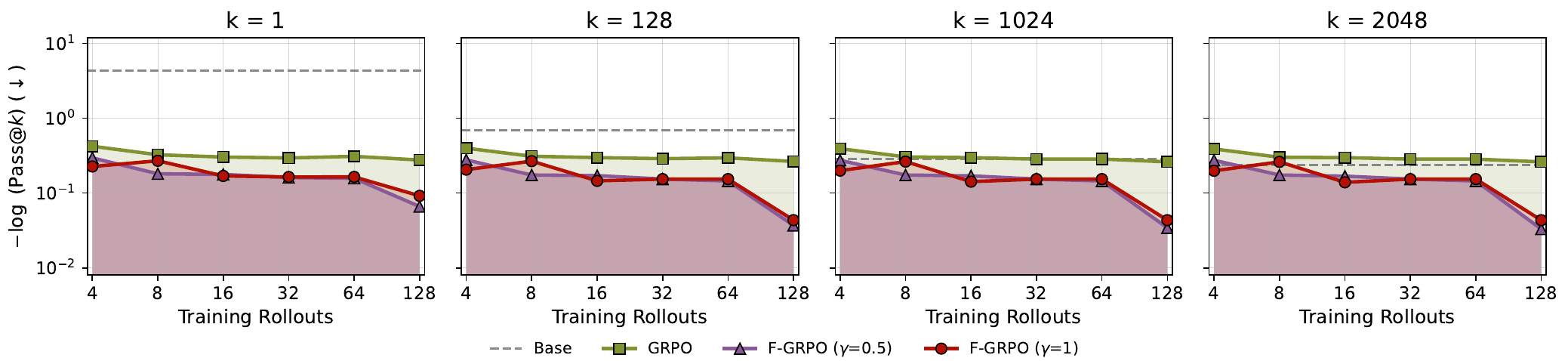}
  \caption{
  Single-solution Maze evaluation across group sizes. Each maze prompt has one target action sequence. The figure reports $-\log(\mathrm{pass@}K)$ for $K\in\{1,128,1024,2048\}$; lower is better. F-GRPO improves target-sequence coverage across group sizes.
  }
  \vspace{-0.5em}
  \label{fig:maze_passk_vs_n}
\end{figure*}

\begin{table*}[h!]
  \centering
  \small
  \setlength{\tabcolsep}{4.0pt}
  \begin{tabular}{c|ccc|ccc|ccc}
  \toprule
  \multirow{2}{*}{\textbf{$N$}} &
  \multicolumn{3}{c|}{\textbf{pass@1}} &
  \multicolumn{3}{c|}{\textbf{pass@128}} &
  \multicolumn{3}{c}{\textbf{pass@256}} \\
  \cmidrule(lr){2-4}\cmidrule(lr){5-7}\cmidrule(lr){8-10}
  & \textbf{GRPO} & \textbf{$\gamma{=}0.5$} & \textbf{$\gamma{=}1.0$}
  & \textbf{GRPO} & \textbf{$\gamma{=}0.5$} & \textbf{$\gamma{=}1.0$}
  & \textbf{GRPO} & \textbf{$\gamma{=}0.5$} & \textbf{$\gamma{=}1.0$} \\
  \midrule
  4 & 65.6 & 74.4 & \textbf{79.7} & 67.1 & 75.8 & \textbf{81.4} & 67.3 & 75.9 & \textbf{81.5} \\
  8 & 72.2 & \textbf{83.4} & 76.3 & 73.3 & \textbf{84.0} & 76.6 & 73.4 & \textbf{84.0} & 76.6 \\
  16 & 73.9 & 83.7 & \textbf{84.4} & 74.2 & 84.3 & \textbf{86.4} & 74.2 & 84.3 & \textbf{86.5} \\
  32 & 74.5 & \textbf{85.1} & 84.9 & 74.9 & \textbf{85.8} & \textbf{85.8} & 75.1 & \textbf{85.8} & \textbf{85.8} \\
  64 & 73.4 & \textbf{85.4} & 84.8 & 74.4 & \textbf{86.5} & 85.8 & 74.7 & \textbf{86.5} & 85.8 \\
  128 & 75.8 & \textbf{93.6} & 91.2 & 76.7 & \textbf{96.4} & 95.7 & 76.9 & \textbf{96.5} & 95.7 \\
  \bottomrule
  \end{tabular}
  \vspace{-0.5em}
  \caption{Final Maze evaluation metrics after the 20K-step run. Values are percentages on the held-out maze set. \(\gamma\) columns denote F-GRPO. \textbf{Bold}: best among GRPO, F-GRPO \(\gamma{=}0.5\), and F-GRPO \(\gamma{=}1.0\) for the same \(N\) and pass@$k$.}
  \label{tab:maze_final_passk}
\end{table*}

\begin{figure*}[h!]
  \centering
  \includegraphics[width=\textwidth]{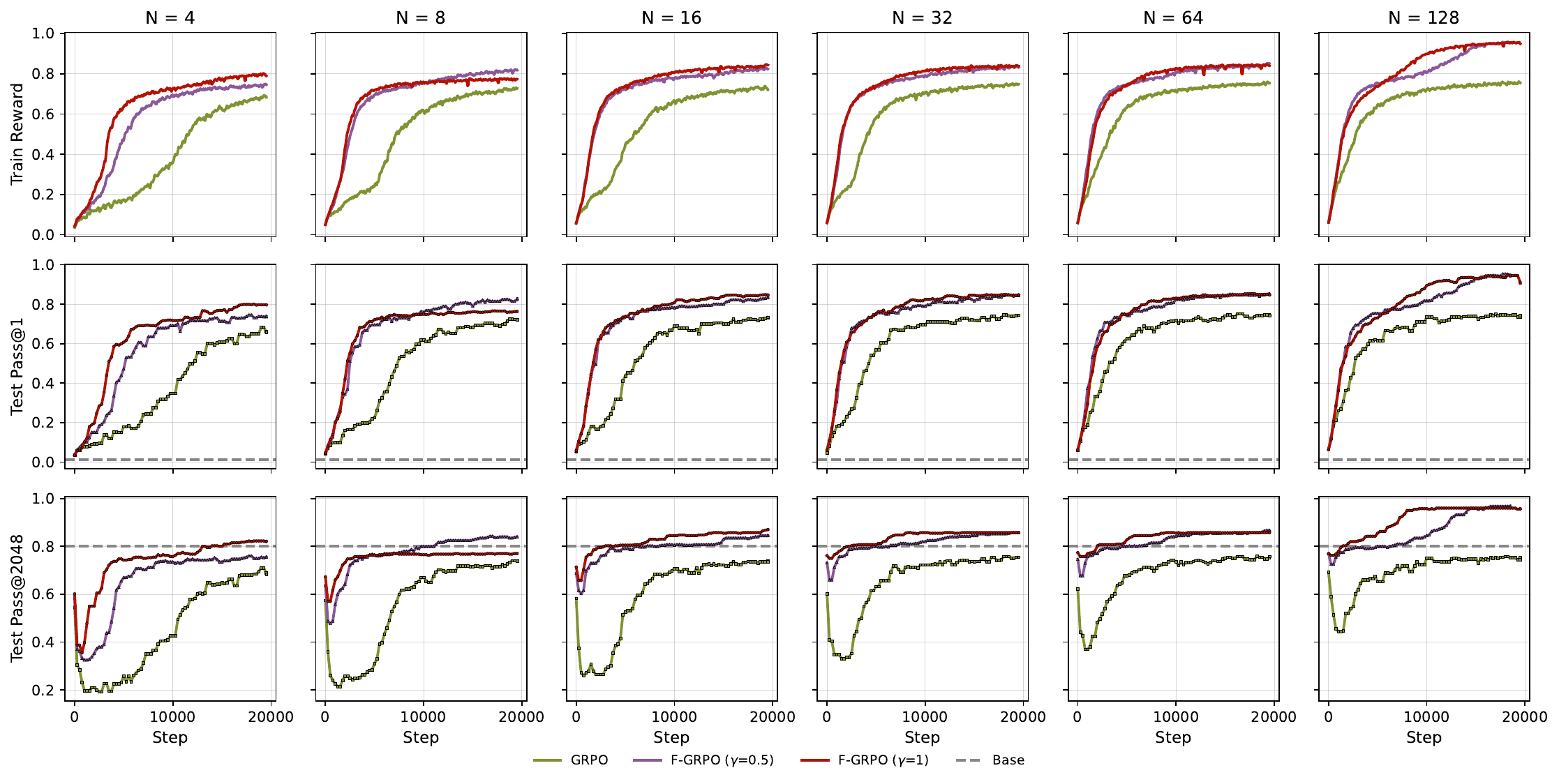}
  \caption{
  Training dynamics on single-solution Maze. GRPO often shows an early drop in evaluation pass@$k$, indicating reduced target-sequence coverage, followed by later recovery. F-GRPO reduces the early degradation across group sizes while improving convergence. Large-$k$ pass@$k$ is used here as a diagnostic for target-sequence coverage.
  }
  \label{fig:maze_training_dynamics}
  \vspace{-1.0em}
\end{figure*}

\section{Maze Experiment Details}
\label{app:maze}

We adapt the maze-navigation setup from concurrent work~\citep{maxrl}, using the shortest-path action sequence as the target for RL and held-out evaluation. The environment uses $1$M procedurally generated $17{\times}17$ training mazes and 256 held-out mazes. Mazes are represented with symbolic grid tokens, and the model autoregressively emits navigation actions ending with a termination token. We use the reported lightweight decoder-only Transformer configuration, with approximately 3M parameters.

Training follows the concurrent group-size sweep. We first run SFT from scratch on provided target trajectories for 1,500 steps with batch size 32, AdamW \cite{adamw}, and learning rate $5{\times}10^{-4}$, which initializes the output format and yields nonzero pass rate. We then run fully on-policy RL with one parameter update per RL step, data batch size 256, learning rate $1{\times}10^{-4}$, and rollout number $N\in\{4,8,16,32,64,128\}$. Final Maze results are reported after 20K RL steps. Each prompt is evaluated against a single target action sequence, so pass@$k$ estimates the probability that at least one of $k$ samples reaches that sequence.

Figure~\ref{fig:maze_passk_vs_n} shows that F-GRPO improves final pass@1 and pass@$k$, and reduces the early training-time drop in pass@$k$. At final evaluation, F-GRPO with $\gamma{=}0.5$ improves pass@1 for every $N$ (74.4--93.6 vs.\ 65.6--75.8 for GRPO) and reduces $-\log(\mathrm{pass@}K)$ across larger $k$; exact final pass@1/128/256 values are in Table~\ref{tab:maze_final_passk}, and the training-dynamics figure additionally reports Test pass@2048. Training dynamics in Figure~\ref{fig:maze_training_dynamics} show that GRPO often exhibits an early evaluation pass@$k$ drop followed by recovery, while F-GRPO reduces this degradation. Because each prompt has a unique target sequence, that early large-$k$ drop cannot be attributed to redistribution among multiple correct trajectories within the same prompt. The Maze results therefore indicate that the observed degradation is not solely a within-prompt multi-solution phenomenon.

Figure~\ref{fig:maze_training_dynamics} reports training reward, pass@1, and diagnostic large-$k$ pass@$k$ across group sizes. The transient evaluation drop under GRPO is consistent with the possibility that finite-group updates selected from training prompts can reduce target-sequence coverage. This dynamics-based evidence is complementary to the final pass@$k$ versus group-size summary in Figure~\ref{fig:maze_passk_vs_n}.

\section{Experimental Details}
\label{app:training}

\subsection{Dataset Preprocessing}
The math experiments use the DeepScaleR math dataset~\citep{deepscaler}. We filter samples longer than 1024 tokens and remove duplicates with conflicting answers, retaining 39,202 samples. The system prompt \texttt{"Please reason step by step, and put your final answer within \textbackslash boxed\{\}."} is prepended to these training inputs. Qwen3-1.7B is trained on SynLogic~\citep{synlogic}. We enable thinking mode during training for both Qwen3 models.

\subsection{Training Configuration}
Training uses the verl pipeline~\citep{verl} with sglang~\citep{sglang} for rollout generation, on 16 NVIDIA H100 GPUs with FSDP2~\citep{fsdp}. Maximum response lengths are 4096 tokens for the Qwen3 models and 8192 tokens for Qwen2.5-7B and Llama-3.2-3B-Instruct. Following~\citep{dapo}, we drop the KL-divergence regularization term and use token-mean loss aggregation. Unless otherwise stated, all experiments use learning rate $1 \times 10^{-6}$ according to \citep{grpo, dapo}.

\textbf{Clipping parameters:} $\epsilon_{\text{low}}{=}0.2$, $\epsilon_{\text{high}}{=}0.2$ for GRPO; $\epsilon_{\text{low}}{=}0.2$, $\epsilon_{\text{high}}{=}0.28$ for DAPO; $\epsilon_{\text{low}}{=}1.0$, $\epsilon_{\text{high}}{=}5.0$ for CISPO, following~\citep{artscaling}. Rewards are 1.0 for correct and 0.0 for incorrect; math rewards are computed with math-verify~\citep{math-verify}. Complete hyperparameters are in Table~\ref{tab:hyperparameters}.

\begin{table}[H]
  \centering
  \small
  \begin{tabular}{lc}
  \toprule
    \textbf{Parameter} & \textbf{Value} \\
    \midrule
    Optimizer & AdamW \\
    $(\beta_1, \beta_2)$ & (0.9, 0.999) \\
    Weight decay & 0.01 \\
    Gradient norm clipping & 1.0 \\
    Learning rate & $1 \times 10^{-6}$ \\
    LR scheduler & Constant \\
    Warmup steps & 15 \\
    Global batch size & 256 \\
    Mini-batch size & 64 \\
    Num training epochs & 10 \\
    PPO epochs & 1 \\
    Sampling temperature & 1.0 \\
    (top-p, top-k) & (1.0, -1) \\
    \bottomrule
  \end{tabular}
  \vspace{-0.5em}
  \caption{Training hyperparameters.}
  \label{tab:hyperparameters}
\end{table}

\textbf{Entropy and KL Regularization:} for the regularization controls in Section~\ref{sec:grpo_controls}, we tune the entropy bonus coefficient over $\{0.0001, 0.001\}$ and the KL penalty coefficient over $\{0.001, 0.01\}$. We select the best checkpoint for each configuration based on average math pass@1. The best-performing coefficients are $0.001$ for both entropy bonus and KL penalty.

\subsection{Evaluation Protocol}

We report unbiased pass@$k$ estimator \cite{pass_at_k}, the probability that at least one of $k$ samples is correct:
\begin{equation}
\text{pass@}k := \mathbb{E}_{\substack{\text{Problems}}}\left[\,1 - \frac{\binom{n-c}{k}}{\binom{n}{k}}\,\right],
\label{eq:pass_at_k}
\end{equation}
where $n$ is the total number of samples and $c$ is the number of correct samples.

For checkpoint selection, we save a checkpoint at the end of each epoch. We choose the best baseline checkpoint by average math pass@1, then compare to the best F-GRPO checkpoint obtained with equal or less compute. Evaluation uses sglang~\citep{sglang}; math benchmarks are scored with math-verify~\citep{math-verify}. Configurations and evaluation prompt settings are in Tables~\ref{tab:eval_config} and~\ref{tab:eval_system_prompts}.


\begin{table}[H]
  \centering
  \small
  \setlength{\tabcolsep}{3pt}
  \renewcommand{\arraystretch}{0.95}
  
  \begin{tabular}{@{}lccc@{}}
  \toprule
  \textbf{Parameter}
  & \makecell{\textbf{Qwen}\\\textbf{2.5-7B}}
  & \makecell{\textbf{Qwen3}\\\textbf{4B / 1.7B}}
  & \makecell{\textbf{Llama3.2}\\\textbf{3B}} \\
  \midrule
  Temperature & 1.0 & 1.0 & 1.0 \\
  top-p       & 1.0 & 1.0 & 1.0 \\
  top-k       & -1  & -1  & -1  \\
  Max length  & 8192 & 4096 & 8192 \\
  \bottomrule
  \end{tabular}
  
  \vspace{-0.5em}
  \caption{Evaluation configurations.}
  \label{tab:eval_config}
\end{table}

\begin{table*}[t!]
\centering
\scriptsize
\resizebox{\textwidth}{!}{%
\begin{tabular}{lp{4.0cm}p{4.7cm}p{4.7cm}}
\toprule
\textbf{Benchmark} & \textbf{Qwen2.5-7B} & \makecell{\textbf{Qwen3-4B-Base}\\\textbf{Qwen3-1.7B}} & \textbf{Llama-3.2-3B-Instruct} \\
\midrule
Mathematical reasoning & \texttt{Please reason step by step, and put your final answer within \textbackslash boxed\{\}.} & \texttt{System: Please reason step by step, and put your final answer within \textbackslash boxed\{\}. Thinking: on.} & \texttt{Cutting Knowledge Date: December 2023\textbackslash nToday Date: [date]\textbackslash nPlease reason step by step, and put your final answer within \textbackslash boxed\{\}.} \\
\midrule
GPQA Diamond & \texttt{Please reason step by step, and put your final answer within \textbackslash boxed\{\}.} & \texttt{System: Please reason step by step, and put your final answer within \textbackslash boxed\{\}. Thinking: on.} & \texttt{Cutting Knowledge Date: December 2023\textbackslash nToday Date: [date]\textbackslash nPlease reason step by step, and put your final answer within \textbackslash boxed\{\}.} \\
\midrule
IFEval & \texttt{You are a helpful assistant.} & \texttt{No system prompt. Thinking: off.} & \texttt{Cutting Knowledge Date: December 2023\textbackslash nToday Date: [date]} \\
\midrule
SynLogic & \texttt{You are a helpful assistant.} & \texttt{No system prompt. Thinking: on.} & \texttt{Cutting Knowledge Date: December 2023\textbackslash nToday Date: [date]} \\
\bottomrule
\end{tabular}%
}
\vspace{-0.5em}
\caption{Evaluation prompt settings.}
\label{tab:eval_system_prompts}
\end{table*}

\subsection{Focal Weight Hyperparameter $\gamma$}
\label{app:gamma_sweep}

\begin{table*}[h!]
  \centering
  \small
  \begin{tabular}{lccc}
  \toprule
  \textbf{Model} & \textbf{F-GRPO $\gamma$} & \textbf{F-DAPO $\gamma$} & \textbf{F-CISPO $\gamma$} \\
  \midrule
  Qwen2.5-7B & 0.5 & 0.5 & 1.0 \\
  Qwen3-4B-Base & 0.5 & 0.5 & 0.5 \\
  Llama-3.2-3B-Instruct & 0.5 & 1.0 & 0.5 \\
  Qwen3-1.7B (SynLogic) & 0.5 & 1.0 & 1.0 \\
  \bottomrule
  \end{tabular}
  \vspace{-0.5em}
  \caption{Selected Focal weight \(\gamma\) for each method-model setup at $N{=}8$ (Tables~\ref{tab:all_models} and~\ref{tab:synlogic_summary}). The sweep range is \(\{0.5, 1.0, 2.0\}\).}
  \label{tab:gamma_selected}
\end{table*}

\begin{table*}[h!]
  \centering
  \small
  \setlength{\tabcolsep}{2.8pt}
  \begin{tabular}{lc|cc|cc|cc}
  \toprule
  \multirow{2}{*}{\textbf{Method}} & \multirow{2}{*}{\(\boldsymbol{\gamma}\)} &
  \multicolumn{2}{c}{\textbf{Qwen2.5-7B}} &
  \multicolumn{2}{c}{\textbf{Qwen3-4B-Base}} &
  \multicolumn{2}{c}{\textbf{Llama-3.2-3B-Instruct}} \\
  \cmidrule(lr){3-4}\cmidrule(lr){5-6}\cmidrule(lr){7-8}
  & & \textbf{Avg. Math} & \textbf{Avg. OOD} & \textbf{Avg. Math} & \textbf{Avg. OOD} & \textbf{Avg. Math} & \textbf{Avg. OOD} \\
  \midrule
  \rule{0pt}{2.0ex}GRPO & 0 & \utilde{37.3} / 64.1 & 17.1 / 55.9 & 39.9 / 71.1 & 22.2 / 67.9 & \textbf{23.0} / 59.9 & \textbf{25.5} / \utilde{56.5} \\
  GRPO & 0.5 & \textbf{38.6} / 70.3 & \textbf{19.2} / \textbf{63.3} & \textbf{42.7} / \utilde{76.0} & \textbf{24.3} / 73.6 & \textbf{23.0} / \utilde{63.4} & \utilde{25.4} / \textbf{57.6} \\
  GRPO & 1.0 & 37.2 / \utilde{70.4} & 17.9 / 60.6 & 41.2 / 73.5 & 22.5 / \textbf{75.9} & \utilde{22.5} / 63.2 & 23.7 / 54.3 \\
  GRPO & 2.0 & 36.1 / \textbf{70.7} & \utilde{18.9} / \utilde{62.3} & \utilde{41.3} / \textbf{77.8} & \utilde{23.2} / \utilde{73.7} & 21.8 / \textbf{64.0} & 25.0 / 54.6 \\
  \midrule
  DAPO & 0 & 39.4 / 69.3 & 15.7 / 58.4 & \utilde{45.1} / 76.1 & \utilde{20.7} / 74.4 & \utilde{24.3} / 54.2 & 23.9 / 51.3 \\
  DAPO & 0.5 & \textbf{40.5} / 72.5 & \utilde{17.9} / 63.6 & \textbf{45.3} / 76.4 & \textbf{23.2} / \utilde{76.8} & 21.6 / 55.3 & 24.1 / 52.3 \\
  DAPO & 1.0 & \utilde{39.8} / \utilde{74.9} & \textbf{18.0} / \textbf{65.7} & 43.5 / \utilde{79.0} & 18.7 / 76.7 & \textbf{24.8} / \textbf{62.3} & \textbf{24.8} / \utilde{55.4} \\
  DAPO & 2.0 & 39.5 / \textbf{76.8} & 17.6 / \utilde{64.0} & 42.9 / \textbf{80.0} & 18.2 / \textbf{81.3} & 23.9 / \utilde{61.1} & \utilde{24.2} / \textbf{55.5} \\
  \midrule
  CISPO & 0 & \textbf{39.5} / 73.2 & 14.9 / 59.0 & \utilde{45.8} / 75.3 & 22.6 / 76.5 & \utilde{24.1} / 58.0 & \textbf{25.7} / 52.5 \\
  CISPO & 0.5 & \utilde{39.0} / 72.1 & 17.2 / 59.4 & \textbf{46.0} / \utilde{79.2} & \textbf{23.9} / \utilde{80.0} & \textbf{24.5} / 59.7 & \utilde{25.0} / \utilde{53.0} \\
  CISPO & 1.0 & \textbf{39.5} / \textbf{76.8} & \textbf{18.1} / \textbf{65.9} & 43.6 / 79.1 & 21.8 / 76.5 & 23.9 / \utilde{59.8} & 24.3 / 52.9 \\
  CISPO & 2.0 & 38.0 / \utilde{75.1} & \utilde{17.9} / \utilde{64.4} & 44.2 / \textbf{79.5} & \utilde{22.9} / \textbf{80.7} & 22.0 / \textbf{63.8} & 23.5 / \textbf{57.0} \\
  \bottomrule
  \end{tabular}
  \vspace{0.5em}
  \caption{Focal-weight sensitivity at $N{=}8$ across models and base methods. Metrics are average pass@1 / pass@256. \(\gamma=0\) denotes the unweighted baseline for each base method. \textbf{Bold}: best; $\dagger$: second best within each base method and metric column.}
  \label{tab:gamma_sweep_full}
\end{table*}

Following the model-selection protocol above, we sweep the Focal exponent $\gamma \in \{0.5, 1.0, 2.0\}$ for each Focal-weighted method (F-GRPO, F-DAPO, F-CISPO). For reproducibility, the selected \(\gamma\) values for the setups reported in Tables~\ref{tab:all_models} and~\ref{tab:synlogic_summary} are summarized in Table~\ref{tab:gamma_selected}. Tables~\ref{tab:gamma_sweep_full} and~\ref{tab:synlogic_full} make the resulting pass@1 / pass@256 trade-off explicit, with \(\gamma=0\) denoting the corresponding unweighted baseline. Across model-method combinations, the selected values in Table~\ref{tab:gamma_selected} are always \(\gamma=0.5\) or \(\gamma=1.0\). Larger \(\gamma=2.0\) sometimes further improves pass@256 but more often introduces pass@1 trade-offs, suggesting that the practical search range is small.

\subsection{SynLogic Results}

Table~\ref{tab:synlogic_full} reports the complete Qwen3-1.7B results for the SynLogic setup across all evaluated values of \(\gamma\).

\begin{table*}[h!]
\centering
\scriptsize
\setlength{\tabcolsep}{2.2pt}
\resizebox{\textwidth}{!}{%
\begin{tabular}{l|c|c|cccccc}
\Xhline{1.1pt}
\rule{0pt}{2.0ex} & \multicolumn{1}{c|}{\textbf{In-domain}} & \multicolumn{7}{c}{\textbf{Out-of-domain}} \\
\hhline{~|-|-|------}
\rule{0pt}{2.0ex}\textbf{Method} & \textbf{SynLogic} & \textbf{Avg. OOD} & \textbf{AIME24} & \textbf{AIME25} & \textbf{AMC23} & \textbf{MATH500} & \textbf{IFEval} & \makecell{\textbf{GPQA}\\\textbf{Diamond}} \\
\Xhline{1.1pt}
\rule{0pt}{2.5ex}GRPO & \utilde{40.1} / 74.8 & 31.7 / 60.6 & 8.2 / 42.2 & 14.0 / 28.9 & 46.2 / 82.3 & 74.1 / 93.0 & 33.8 / 46.6 & 14.0 / 70.3 \\
\rowcolor{gray!15}
F-GRPO ($\gamma{=}0.5$) & \textbf{41.8} / \textbf{79.4} & \utilde{35.8} / \utilde{63.9} & \utilde{13.4} / \textbf{46.3} & \utilde{15.6} / \textbf{31.8} & \utilde{52.1} / \utilde{86.5} & \utilde{78.3} / \textbf{94.6} & 34.1 / 43.5 & \textbf{21.4} / \utilde{80.7} \\
F-GRPO ($\gamma{=}1$) & 37.6 / \utilde{77.3} & \textbf{36.0} / \textbf{64.9} & \textbf{14.0} / \utilde{44.6} & \textbf{16.7} / \utilde{31.5} & \textbf{52.6} / \textbf{87.0} & \textbf{78.4} / \utilde{94.1} & \textbf{35.1} / \textbf{51.3} & \utilde{19.0} / \textbf{81.2} \\
F-GRPO ($\gamma{=}2$) & 37.1 / 76.5 & 32.8 / 61.7 & 10.2 / 41.7 & 14.2 / 30.2 & 47.8 / 83.1 & 74.4 / 93.9 & \utilde{34.3} / \utilde{46.7} & 16.2 / 74.4 \\[0.5ex]
\Xhline{0.2pt}
\rule{0pt}{2.5ex}DAPO & \utilde{57.1} / 86.3 & 38.7 / 67.1 & \textbf{21.0} / 56.3 & 19.7 / 39.3 & 56.8 / \utilde{94.8} & \utilde{82.4} / 95.6 & 33.0 / 45.6 & 19.2 / 70.9 \\
F-DAPO ($\gamma{=}0.5$) & 56.1 / 86.8 & \utilde{39.0} / 67.7 & \utilde{20.7} / 59.7 & \textbf{20.7} / \utilde{39.9} & \utilde{57.4} / 93.1 & \textbf{82.6} / 95.7 & \utilde{33.9} / \textbf{46.6} & 18.8 / 71.1 \\
\rowcolor{gray!15}
F-DAPO ($\gamma{=}1$) & \textbf{58.6} / \utilde{87.2} & \textbf{39.1} / \utilde{69.4} & 19.7 / \utilde{60.0} & \utilde{19.9} / \textbf{40.8} & \textbf{59.1} / 94.2 & \utilde{82.4} / \utilde{96.3} & 33.7 / 46.3 & \utilde{20.2} / \utilde{78.6} \\
F-DAPO ($\gamma{=}2$) & 53.4 / \textbf{87.4} & \utilde{39.0} / \textbf{70.1} & 18.7 / \textbf{60.5} & 18.9 / 37.2 & 56.8 / \textbf{95.0} & 82.0 / \textbf{96.8} & \textbf{34.2} / \utilde{46.4} & \textbf{23.6} / \textbf{84.8} \\[0.5ex]
\Xhline{0.2pt}
\rule{0pt}{2.5ex}CISPO & 45.5 / 78.5 & 36.2 / 63.4 & \textbf{19.0} / 51.1 & 16.0 / \utilde{34.1} & 52.9 / \utilde{89.2} & 79.0 / 93.9 & \utilde{34.2} / 44.8 & 16.1 / 67.5 \\
F-CISPO ($\gamma{=}0.5$) & \utilde{48.7} / \utilde{80.4} & \textbf{39.0} / \utilde{66.7} & \utilde{18.0} / \textbf{54.7} & \textbf{18.8} / \textbf{34.6} & \utilde{55.5} / 89.0 & \utilde{81.1} / \utilde{95.2} & \textbf{35.0} / \textbf{45.7} & \textbf{25.8} / \utilde{80.9} \\
\rowcolor{gray!15}
F-CISPO ($\gamma{=}1$) & \textbf{51.0} / \textbf{85.2} & 37.4 / 64.2 & 16.7 / 48.3 & \utilde{18.5} / 30.5 & 54.7 / \utilde{89.2} & 79.8 / 94.7 & 32.8 / \utilde{44.9} & 21.9 / 77.4 \\
F-CISPO ($\gamma{=}2$) & 38.1 / 78.1 & \utilde{38.7} / \textbf{66.9} & 16.5 / \utilde{54.3} & 17.7 / 33.8 & \textbf{57.9} / \textbf{90.5} & \textbf{81.2} / \textbf{96.2} & 33.5 / 43.8 & \utilde{25.3} / \textbf{83.1} \\[0.5ex]
\Xhline{1.1pt}
\end{tabular}%
}
\caption{Full Qwen3-1.7B SynLogic results at $N{=}8$. Metrics are pass@1 / pass@256. Avg. OOD is the mean over the six OOD benchmarks defined in Section~\ref{sec:exp_setup}. \textbf{Bold}: best; $\dagger$: second best within each base method and metric column; gray rows mark $\gamma$ selected by SynLogic pass@1.}
\label{tab:synlogic_full}
\end{table*}

\section{Group Size Comparison: Full Results}
\label{app:group_size_results}

\subsection{Per-Benchmark Results}

Table~\ref{tab:grpo_n_comparison_table} provides full per-benchmark results for the group size comparison discussed in Section~\ref{sec:results:scaling}.

\begin{table*}[h!]
\centering
\scriptsize
\newcommand{\pmsep}{\hspace{0.45mm}}
\setlength{\tabcolsep}{2.2pt}
\setlength{\extrarowheight}{-0.0pt}
\resizebox{\textwidth}{!}{%
\begin{tabular}{l|c|cccccc|c|ccc}
\Xhline{1.1pt}
\rule{0pt}{2.0ex} & \multicolumn{7}{c|}{\textbf{In-domain}} & \multicolumn{4}{c}{\textbf{Out-of-domain}} \\
\hhline{~|-|------|-|---}
\rule{0pt}{2.0ex} \textbf{Method} & \textbf{Avg.} & \textbf{AIME24} & \textbf{AIME25} & \textbf{AMC} & \textbf{MATH500} & \textbf{Minerva} & \textbf{Olympiad} & \textbf{Avg. OOD} & \textbf{IFEval} & \textbf{SynLogic} & \textbf{GPQA}  \\
\Xhline{1.1pt}
\rule{0pt}{2.6ex}GRPO $N{=}2$ & 36.2\pmsep/\pmsep\textbf{75.0} & 12.7\pmsep/\pmsep\textbf{59.1} & 8.3\pmsep/\pmsep\textbf{56.0} & 51.9\pmsep/\pmsep\textbf{97.0} & 74.5\pmsep/\pmsep\textbf{96.7} & 33.2\pmsep/\pmsep\textbf{65.6} & 36.7\pmsep/\pmsep\textbf{75.7} & 18.0\pmsep/\pmsep\textbf{67.3} & 29.4\pmsep/\pmsep\textbf{77.2} & 6.7\pmsep/\pmsep 54.3 & \textbf{17.8}\pmsep/\pmsep\textbf{70.3} \\
\rule{0pt}{2.0ex}GRPO $N{=}4$ & 36.4\pmsep/\pmsep 71.1 & 9.6\pmsep/\pmsep 48.9 & 5.4\pmsep/\pmsep 52.6 & 53.3\pmsep/\pmsep 95.9 & 76.3\pmsep/\pmsep 94.7 & 35.2\pmsep/\pmsep 61.3 & 38.5\pmsep/\pmsep 73.0 & 18.7\pmsep/\pmsep 59.7 & 32.1\pmsep/\pmsep 72.2 & \textbf{9.3}\pmsep/\pmsep 57.0 & 14.7\pmsep/\pmsep 49.8 \\
\rowcolor{gray!15}
\rule{0pt}{2.0ex}GRPO $N{=}8$ & 37.3\pmsep/\pmsep 64.1 & 15.0\pmsep/\pmsep 37.7 & 6.7\pmsep/\pmsep 40.8 & 52.9\pmsep/\pmsep 87.3 & 75.8\pmsep/\pmsep 92.8 & \textbf{36.0}\pmsep/\pmsep 60.2 & 37.8\pmsep/\pmsep 65.8 & 17.1\pmsep/\pmsep 55.9 & 32.1\pmsep/\pmsep 70.3 & 7.9\pmsep/\pmsep 51.3 & 11.3\pmsep/\pmsep 46.2 \\
\rule{0pt}{2.0ex}GRPO $N{=}16$ & 38.4\pmsep/\pmsep 67.5 & 13.0\pmsep/\pmsep 43.9 & 7.1\pmsep/\pmsep 48.0 & 57.7\pmsep/\pmsep 93.4 & 76.7\pmsep/\pmsep 93.2 & 35.7\pmsep/\pmsep 58.0 & \textbf{40.1}\pmsep/\pmsep 68.4 & 17.7\pmsep/\pmsep 55.7 & 31.6\pmsep/\pmsep 68.9 & 8.6\pmsep/\pmsep 49.6 & 12.9\pmsep/\pmsep 48.6 \\
\rule{0pt}{2.0ex}GRPO $N{=}32$ & \textbf{39.2}\pmsep/\pmsep 70.1 & 13.0\pmsep/\pmsep 50.2 & \textbf{10.4}\pmsep/\pmsep 49.5 & \textbf{60.9}\pmsep/\pmsep 95.5 & \textbf{77.3}\pmsep/\pmsep 94.3 & 34.9\pmsep/\pmsep 59.9 & 38.9\pmsep/\pmsep 71.3 & 17.7\pmsep/\pmsep 61.7 & 31.0\pmsep/\pmsep 71.4 & 8.9\pmsep/\pmsep\textbf{61.6} & 13.4\pmsep/\pmsep 51.9 \\[0.5ex]
\Xhline{0.2pt}
\rowcolor{gray!15}
\rule{0pt}{2.5ex}F-GRPO $N{=}8$ & 38.6\pmsep/\pmsep 70.3 & \textbf{15.9}\pmsep/\pmsep 46.2 & 10.1\pmsep/\pmsep 52.6 & 56.2\pmsep/\pmsep 96.3 & 76.2\pmsep/\pmsep 95.1 & 35.7\pmsep/\pmsep 60.3 & 37.5\pmsep/\pmsep 71.6 & \textbf{19.2}\pmsep/\pmsep 63.3 & \textbf{34.0}\pmsep/\pmsep 75.7 & 8.7\pmsep/\pmsep 57.0 & 15.0\pmsep/\pmsep 57.3 \\[0.5ex]
\Xhline{1.1pt}
\end{tabular}%
}
\vspace{-0.5em}
\caption{GRPO with different $N$ and F-GRPO on both in-domain math and out-of-domain benchmarks (Qwen2.5-7B). Pass@1 / Pass@256. \textbf{Bold}: best. Gray rows mark the same group size $N{=}8$ comparison.}
\label{tab:grpo_n_comparison_table}
\end{table*}

\begin{table*}[t!]
  \centering
  \scriptsize
  \newcommand{\pmsep}{\hspace{0.45mm}}
  \setlength{\tabcolsep}{2.2pt}
  \setlength{\extrarowheight}{-0.0pt}
  \resizebox{\textwidth}{!}{%
  \begin{tabular}{l|c|cccccc|c|ccc}
  \Xhline{1.1pt}
  \rule{0pt}{2.0ex} & \multicolumn{7}{c|}{\textbf{In-domain}} & \multicolumn{4}{c}{\textbf{Out-of-domain}} \\
  \hhline{~|-|------|-|---}
  \rule{0pt}{2.0ex} \textbf{Method} & \textbf{Avg.} & \textbf{AIME24} & \textbf{AIME25} & \textbf{AMC} & \textbf{MATH500} & \textbf{Minerva} & \textbf{Olympiad} & \textbf{Avg. OOD} & \textbf{IFEval} & \textbf{SynLogic} & \textbf{GPQA}  \\
  \Xhline{1.1pt}
  \rule{0pt}{2.0ex}Base & 10.9\pmsep/\pmsep 70.5 & 2.6\pmsep/\pmsep 48.0 & 1.4\pmsep/\pmsep 43.7 & 16.6\pmsep/\pmsep\textbf{97.3} & 26.5\pmsep/\pmsep\utilde{96.5} & 5.8\pmsep/\pmsep 62.1 & 12.4\pmsep/\pmsep\utilde{75.7} & 9.3\pmsep/\pmsep\textbf{70.7} & 21.5\pmsep/\pmsep\utilde{78.8} & 2.3\pmsep/\pmsep 39.7 & 4.2\pmsep/\pmsep\textbf{93.8} \\
  \Xhline{0.2pt}
  \rule{0pt}{2.0ex}GRPO low-LR & \utilde{37.8}\pmsep/\pmsep 69.2 & \utilde{15.0}\pmsep/\pmsep 51.0 & 8.7\pmsep/\pmsep 44.5 & 53.3\pmsep/\pmsep 95.3 & 75.9\pmsep/\pmsep 94.4 & 35.0\pmsep/\pmsep 59.7 & \textbf{38.7}\pmsep/\pmsep 70.4 & 16.4\pmsep/\pmsep 57.9 & 30.0\pmsep/\pmsep 69.3 & 7.8\pmsep/\pmsep 55.4 & 11.5\pmsep/\pmsep 49.1 \\
  \rule{0pt}{2.0ex}GRPO ($\mathcal{H}$) & \utilde{37.8}\pmsep/\pmsep 69.5 & 14.9\pmsep/\pmsep 48.9 & 7.3\pmsep/\pmsep 52.2 & \utilde{55.8}\pmsep/\pmsep 90.8 & 75.6\pmsep/\pmsep 94.6 & 34.9\pmsep/\pmsep 61.3 & \utilde{38.2}\pmsep/\pmsep 69.2 & 18.7\pmsep/\pmsep 59.9 & 32.1\pmsep/\pmsep 71.9 & \textbf{9.8}\pmsep/\pmsep\textbf{59.9} & 14.3\pmsep/\pmsep 47.8 \\
  GRPO (KL) & 37.2\pmsep/\pmsep\utilde{72.0} & 13.2\pmsep/\pmsep\utilde{53.4} & 8.7\pmsep/\pmsep\textbf{53.7} & 52.1\pmsep/\pmsep 95.9 & \textbf{76.7}\pmsep/\pmsep 95.2 & 34.7\pmsep/\pmsep 61.5 & 38.0\pmsep/\pmsep 72.3 & \textbf{19.4}\pmsep/\pmsep 60.0 & 32.4\pmsep/\pmsep 70.8 & \utilde{8.8}\pmsep/\pmsep 51.7 & \textbf{17.1}\pmsep/\pmsep 57.5 \\
  \Xhline{0.2pt}
  \rule{0pt}{2.0ex}RUL & 37.1\pmsep/\pmsep 69.1 & 13.7\pmsep/\pmsep 48.7 & 8.2\pmsep/\pmsep 42.3 & 54.2\pmsep/\pmsep 92.9 & 75.2\pmsep/\pmsep 95.8 & 33.5\pmsep/\pmsep\utilde{62.4} & 37.8\pmsep/\pmsep 72.5 & 18.1\pmsep/\pmsep 61.7 & 30.1\pmsep/\pmsep 72.1 & 8.6\pmsep/\pmsep 56.0 & \utilde{15.5}\pmsep/\pmsep 56.9 \\
  \rule{0pt}{2.0ex}DARO & 37.3\pmsep/\pmsep 69.4 & 13.6\pmsep/\pmsep 45.6 & 7.0\pmsep/\pmsep 46.6 & 55.4\pmsep/\pmsep\utilde{97.2} & 74.7\pmsep/\pmsep 94.7 & \utilde{35.5}\pmsep/\pmsep 62.2 & 37.2\pmsep/\pmsep 70.1 & 18.8\pmsep/\pmsep 63.3 & 32.8\pmsep/\pmsep 75.2 & 8.7\pmsep/\pmsep 56.4 & 14.9\pmsep/\pmsep 58.2 \\
  DS-GRPO & 37.7\pmsep/\pmsep\textbf{73.8} & 14.3\pmsep/\pmsep\textbf{57.7} & \textbf{12.2}\pmsep/\pmsep\utilde{53.6} & 51.6\pmsep/\pmsep 95.5 & \utilde{76.6}\pmsep/\pmsep\textbf{96.8} & 33.5\pmsep/\pmsep\textbf{62.8} & 37.8\pmsep/\pmsep\textbf{76.7} & 17.9\pmsep/\pmsep\utilde{68.3} & \utilde{33.4}\pmsep/\pmsep\textbf{85.9} & 7.5\pmsep/\pmsep\utilde{57.7} & 12.7\pmsep/\pmsep\utilde{61.3} \\
  \rowcolor{gray!15}
  \rule{0pt}{2.0ex}F-GRPO & \textbf{38.6}\pmsep/\pmsep 70.3 & \textbf{15.9}\pmsep/\pmsep 46.2 & \utilde{10.1}\pmsep/\pmsep 52.6 & \textbf{56.2}\pmsep/\pmsep 96.3 & 76.2\pmsep/\pmsep 95.1 & \textbf{35.7}\pmsep/\pmsep 60.3 & 37.5\pmsep/\pmsep 71.6 & \utilde{19.2}\pmsep/\pmsep 63.3 & \textbf{34.0}\pmsep/\pmsep 75.7 & 8.7\pmsep/\pmsep 57.0 & 15.0\pmsep/\pmsep 57.3 \\
  \Xhline{1.1pt}
  \end{tabular}%
  }
  \vspace{-0.5em}
  \caption{Qwen2.5-7B base-model reference and baseline comparison; trained methods use $N{=}8$. GRPO low-LR uses $\mathrm{lr}_{\mathrm{low}}=6.8{\times}10^{-7}$; GRPO-$\mathcal{H}$ and GRPO-KL use coefficient $0.001$. Pass@1 / pass@256. \textbf{Bold}: best; $\dagger$: second best.}
  \label{tab:grpo_controls_table}
\end{table*}

\subsection{NLL on Rare-Correct Trajectories}
\label{app:nll_rare}

To construct a proxy for rare-correct modes, we sample 256 prompts from the training set and generate 800 rollouts per prompt from the base model, retaining only correct trajectories. For each retained trajectory, we compute its length-normalized NLL under the base model. We define the ``rare-correct`` subset as the top 1\% by base-model NLL among these correct trajectories, yielding 1,263 trajectories in total. We then compute the NLL of this fixed subset under each trained model; larger values indicate reduced probability assigned to these initially low-probability correct solutions.

\paragraph{Full-trajectory probability on fixed rare subsets.}
Using the same base-model generations, we retain 97 prompts with at least 100 distinct completed correct trajectories and select the five lowest-likelihood trajectories under the base model for each prompt. At matched $N{=}8$, their summed full-trajectory probability is higher under F-GRPO than GRPO on 89/97 prompts. When normalized by the total probability assigned to the same prompt-specific correct pool, the corresponding share is higher under F-GRPO on 77/97 prompts.

\subsection{Correct-Solution Diversity}
\label{app:solution_diversity}

We compare the selected Qwen2.5-7B GRPO and F-GRPO checkpoints from Table~\ref{tab:all_models}, both trained with $N{=}8$, using the 1,024 evaluation responses per problem. For the embedding-based analysis, we encode every correct response with Qwen3-Embedding-8B~\citep{qwen3_embedding} using a maximum length of 8,192 tokens. Following prior embedding-based diversity analyses~\citep{karouzos2026collapse,dra_grpo}, we retain prompts for which both models produce at least two correct responses, compute the mean pairwise cosine distance within each prompt, and average over prompts. This yields 11, 13, 36, 461, 159, and 434 common prompts for AIME24, AIME25, AMC23, MATH500, Minerva, and Olympiad, respectively.

Because embedding distance can also reflect paraphrasing, we additionally use GPT-5.1~\citep{gpt5} with the 0--3 mathematical-strategy rubric from APMPO~\citep{apmpo}. The judge scores non-overlapping groups of eight correct responses, distinguishing same-logic rephrasings from changes in derivation or mathematical approach. We retain prompts for which both models provide at least one such group. Due to API cost, we apply this analysis to the four benchmarks with the fewest problems; the resulting common sets contain 9, 8, 33, and 147 prompts for AIME24, AIME25, AMC23, and Minerva, respectively.

\begin{table}[t]
\centering
\small
\setlength{\tabcolsep}{2.8pt}
\begin{tabular}{l|cc|cc}
\Xhline{1.1pt}
\rule{0pt}{2.0ex} & \multicolumn{2}{c|}{\textbf{Embedding Distance}} & \multicolumn{2}{c}{\textbf{GPT Strategy Score}} \\
\hhline{~|--|--}
\rule{0pt}{2.0ex}\textbf{Benchmark} & \textbf{GRPO} & \textbf{F-GRPO} & \textbf{GRPO} & \textbf{F-GRPO} \\
\Xhline{1.1pt}
\rule{0pt}{2.0ex}AIME24 & 0.0110 & \textbf{0.0179} & 0.9979 & \textbf{1.4726} \\
AIME25 & 0.0205 & \textbf{0.0223} & 0.9136 & \textbf{1.0518} \\
AMC23 & 0.0105 & \textbf{0.0136} & 0.7742 & \textbf{0.8910} \\
MATH500 & 0.0110 & \textbf{0.0125} & -- & -- \\
Minerva & 0.0216 & \textbf{0.0282} & 0.5896 & \textbf{0.6852} \\
Olympiad & 0.0134 & \textbf{0.0158} & -- & -- \\
\Xhline{0.2pt}
\rule{0pt}{2.0ex}\textbf{All Math} & 0.0147 & \textbf{0.0184} & -- & -- \\
\Xhline{1.1pt}
\end{tabular}
\caption{Correct-solution diversity for the selected Qwen2.5-7B GRPO and F-GRPO checkpoints at $N{=}8$; F-GRPO uses $\gamma{=}0.5$. Higher is better. Embedding values are mean within-prompt pairwise cosine distances; GPT values are mean 0--3 mathematical-strategy scores. All Math is the mean of the six embedding-distance values.}
\label{tab:solution_diversity}
\end{table}

F-GRPO has higher embedding distance on all six benchmarks and a higher GPT strategy score on all four judged benchmarks.

\section{Regularization and Update-Scale Controls: Full Results}
\label{app:grpo_controls}

\paragraph{Learning-rate multiplier.}
For GRPO low-LR, we match the average advantage-magnitude scale induced by Focal weighting. Write $\widehat{\mu}=\widehat{\mu}_{\mathrm{pos}}(x)$. With binary rewards, the mean absolute normalized GRPO advantage at sampled success rate $\widehat{\mu}$ is proportional to $\sqrt{\widehat{\mu}(1-\widehat{\mu})}$, while F-GRPO multiplies it by $(1-\widehat{\mu})^\gamma$. Averaging this multiplicative reduction over the success-rate axis $\widehat{\mu}\in[0,1]$ gives
\[
m_\gamma =
\frac{\int_0^1 (1-\widehat{\mu})^\gamma \sqrt{\widehat{\mu}(1-\widehat{\mu})}\,d\widehat{\mu}}{\int_0^1 \sqrt{\widehat{\mu}(1-\widehat{\mu})}\,d\widehat{\mu}}.
\]
The numerator is $B(\frac{3}{2},\gamma+\frac{3}{2})$ and the denominator is $B(\frac{3}{2},\frac{3}{2})$, where $B(a,b):=\int_0^1 t^{a-1}(1-t)^{b-1}\,dt$ is the beta function. Therefore,
\[
m_\gamma =
\frac{B(\frac{3}{2},\gamma+\frac{3}{2})}{B(\frac{3}{2},\frac{3}{2})}.
\]
Using $B(a,b)=\Gamma(a)\Gamma(b)/\Gamma(a+b)$ together with $\Gamma(\frac{3}{2})=\sqrt{\pi}/2$ and $\Gamma(3)=2$, we obtain
\[
m_\gamma =
\frac{4}{\sqrt{\pi}}\frac{\Gamma(\gamma+\frac{3}{2})}{\Gamma(\gamma+3)}.
\]

\section{LLM Pass@$k$ Tables}
\label{app:llm_passk_curves}

Tables \ref{tab:passk_qwen2_5_7b_pairwise}-\ref{tab:passk_llama_3_2_3b_instruct_pairwise} report benchmark-level pass@$k$ values up to $k{=}256$ for the same selected checkpoints used in Table~\ref{tab:all_models}. For each model, we follow the same evaluation protocol as in the main results. Each cell shows baseline/focal pass@$k$, with the better value in bold.

\begin{table*}[t]
\centering
\setlength{\tabcolsep}{4.5pt}
\resizebox{\textwidth}{!}{%
\begin{tabular}{cr|cccccc|ccc}
\toprule
\textbf{Method} & \textbf{$k$} & \textbf{AIME24} & \textbf{AIME25} & \textbf{AMC23} & \textbf{MATH500} & \textbf{Minerva} & \textbf{Olympiad} & \textbf{IFEval} & \textbf{SynLogic} & \textbf{GPQA} \\
\midrule
\multirow{9}{*}{\pairmethod{GRPO}{F-GRPO}} & 1 & 15.0/\textbf{15.9} & 6.7/\textbf{10.1} & 52.9/\textbf{56.2} & 75.8/\textbf{76.2} & \textbf{36.0}/35.7 & \textbf{37.8}/37.5 & 32.1/\textbf{34.0} & 7.9/\textbf{8.7} & 11.3/\textbf{15.0} \\
 & 2 & 17.4/\textbf{20.1} & 9.4/\textbf{14.2} & 59.9/\textbf{65.9} & 80.0/\textbf{81.2} & \textbf{41.1}/40.9 & 43.5/\textbf{43.9} & 39.9/\textbf{42.7} & 11.0/\textbf{12.1} & 16.8/\textbf{21.8} \\
 & 4 & 19.8/\textbf{23.2} & 13.4/\textbf{19.1} & 65.1/\textbf{73.5} & 83.2/\textbf{84.8} & 45.4/45.4 & 48.6/\textbf{49.6} & 46.5/\textbf{49.9} & 15.1/\textbf{16.8} & 22.5/\textbf{29.1} \\
 & 8 & 22.6/\textbf{26.2} & 18.3/\textbf{23.9} & 69.2/\textbf{79.3} & 85.9/\textbf{87.7} & 49.0/49.0 & 52.9/\textbf{54.4} & 52.0/\textbf{55.9} & 20.2/\textbf{22.9} & 28.0/\textbf{36.0} \\
 & 16 & 25.6/\textbf{30.0} & 23.1/\textbf{28.3} & 73.1/\textbf{83.5} & 88.0/\textbf{90.0} & 51.9/51.9 & 56.6/\textbf{58.4} & 56.7/\textbf{60.9} & 26.2/\textbf{30.1} & 32.7/\textbf{41.8} \\
 & 32 & 29.0/\textbf{34.6} & 27.1/\textbf{32.8} & 77.2/\textbf{87.2} & 89.5/\textbf{91.8} & \textbf{54.5}/54.3 & 59.6/\textbf{62.0} & 60.9/\textbf{65.5} & 32.7/\textbf{37.6} & 36.8/\textbf{46.7} \\
 & 64 & 32.2/\textbf{39.2} & 30.9/\textbf{38.2} & 81.0/\textbf{91.2} & 90.7/\textbf{93.1} & \textbf{56.8}/56.4 & 62.0/\textbf{65.5} & 64.5/\textbf{69.7} & 39.4/\textbf{44.8} & 40.4/\textbf{50.8} \\
 & 128 & 35.0/\textbf{42.9} & 35.2/\textbf{44.8} & 84.5/\textbf{94.5} & 91.8/\textbf{94.2} & \textbf{58.7}/58.4 & 63.9/\textbf{68.8} & 67.7/\textbf{73.1} & 45.7/\textbf{51.3} & 43.5/\textbf{54.2} \\
 & 256 & 37.7/\textbf{46.2} & 40.8/\textbf{52.6} & 87.3/\textbf{96.3} & 92.8/\textbf{95.1} & 60.2/\textbf{60.3} & 65.8/\textbf{71.6} & 70.3/\textbf{75.7} & 51.3/\textbf{57.0} & 46.2/\textbf{57.3} \\
\midrule
\multirow{9}{*}{\pairmethod{DAPO}{F-DAPO}} & 1 & 16.8/\textbf{20.9} & \textbf{12.0}/11.5 & 53.3/\textbf{55.9} & 78.6/\textbf{79.1} & \textbf{35.5}/35.0 & 40.5/\textbf{40.9} & 24.1/\textbf{30.8} & 7.5/\textbf{7.9} & \textbf{15.4}/15.0 \\
 & 2 & 20.6/\textbf{25.3} & 15.6/\textbf{15.7} & 61.6/\textbf{64.2} & 82.9/\textbf{84.4} & 41.3/41.3 & 46.6/\textbf{48.2} & 33.1/\textbf{39.1} & 10.1/\textbf{11.4} & \textbf{22.2}/21.9 \\
 & 4 & 23.9/\textbf{28.9} & 19.8/\textbf{20.1} & 68.5/\textbf{70.9} & 86.1/\textbf{88.0} & 45.8/\textbf{46.0} & 51.6/\textbf{54.3} & 41.0/\textbf{46.1} & 13.8/\textbf{16.6} & 28.8/\textbf{28.9} \\
 & 8 & 27.4/\textbf{32.6} & 24.4/\textbf{25.3} & 74.3/\textbf{77.3} & 88.6/\textbf{90.8} & 49.6/\textbf{50.0} & 56.1/\textbf{59.2} & 47.5/\textbf{51.8} & 19.0/\textbf{23.6} & 34.3/\textbf{34.6} \\
 & 16 & 31.1/\textbf{36.6} & 29.1/\textbf{31.1} & 79.4/\textbf{83.0} & 90.6/\textbf{92.9} & 52.8/\textbf{53.4} & 60.0/\textbf{63.3} & 52.8/\textbf{56.6} & 25.4/\textbf{31.9} & 38.7/\textbf{39.7} \\
 & 32 & 35.2/\textbf{40.7} & 34.2/\textbf{37.2} & 83.9/\textbf{87.8} & 92.1/\textbf{94.4} & 55.5/\textbf{56.1} & 63.3/\textbf{67.0} & 57.3/\textbf{60.9} & 32.4/\textbf{40.3} & 42.8/\textbf{44.7} \\
 & 64 & 39.5/\textbf{44.7} & 39.0/\textbf{43.1} & 87.6/\textbf{91.0} & 93.4/\textbf{95.4} & 57.8/\textbf{58.6} & 66.3/\textbf{70.3} & 61.1/\textbf{64.7} & 39.3/\textbf{48.2} & 46.9/\textbf{49.8} \\
 & 128 & 44.3/\textbf{49.0} & 42.6/\textbf{48.2} & 90.4/\textbf{92.8} & 94.4/\textbf{96.1} & 59.7/\textbf{60.9} & 69.0/\textbf{73.1} & 64.4/\textbf{68.0} & 46.1/\textbf{55.5} & 51.0/\textbf{54.0} \\
 & 256 & 49.8/\textbf{53.4} & 45.6/\textbf{52.9} & 91.9/\textbf{93.7} & 95.2/\textbf{96.6} & 61.2/\textbf{62.9} & 71.8/\textbf{75.6} & 67.1/\textbf{71.1} & 53.3/\textbf{62.4} & 54.9/\textbf{57.4} \\
\midrule
\multirow{9}{*}{\pairmethod{CISPO}{F-CISPO}} & 1 & 14.6/\textbf{14.8} & 9.7/\textbf{13.0} & \textbf{57.8}/53.3 & 78.7/\textbf{79.0} & \textbf{34.7}/34.6 & 41.5/\textbf{42.4} & 24.2/\textbf{30.7} & 8.0/\textbf{8.2} & 12.6/\textbf{15.4} \\
 & 2 & 18.7/\textbf{20.2} & 14.2/\textbf{18.0} & \textbf{67.4}/64.2 & 84.4/\textbf{85.0} & 40.5/\textbf{41.7} & 48.9/\textbf{50.1} & 32.2/\textbf{38.7} & 11.3/\textbf{11.7} & 19.2/\textbf{23.5} \\
 & 4 & 22.9/\textbf{25.3} & 19.6/\textbf{22.8} & 74.9/\textbf{75.0} & 88.4/\textbf{88.9} & 45.8/\textbf{47.4} & 55.3/\textbf{56.3} & 39.1/\textbf{45.3} & 15.8/\textbf{16.7} & 26.6/\textbf{32.1} \\
 & 8 & 27.8/\textbf{30.8} & 26.0/\textbf{28.0} & 82.0/\textbf{83.0} & 91.2/\textbf{91.7} & 50.5/\textbf{51.8} & 60.6/\textbf{61.3} & 45.3/\textbf{51.0} & 21.6/\textbf{23.1} & 33.7/\textbf{40.3} \\
 & 16 & 32.9/\textbf{36.8} & 33.0/\textbf{34.4} & 87.4/\textbf{89.5} & 93.2/\textbf{93.9} & 54.2/\textbf{55.1} & 65.1/\textbf{65.6} & 50.9/\textbf{55.9} & 28.0/\textbf{30.5} & 39.7/\textbf{47.7} \\
 & 32 & 37.3/\textbf{43.4} & 40.3/\textbf{42.1} & 91.5/\textbf{93.2} & 94.7/\textbf{95.5} & 57.0/\textbf{57.7} & 68.8/\textbf{69.2} & 55.9/\textbf{60.1} & 34.5/\textbf{38.2} & 44.8/\textbf{54.2} \\
 & 64 & 40.3/\textbf{50.0} & 47.3/\textbf{50.5} & 94.2/\textbf{95.1} & 95.7/\textbf{96.6} & 59.3/\textbf{60.1} & 71.9/\textbf{72.1} & 60.4/\textbf{64.0} & 40.8/\textbf{45.8} & 49.1/\textbf{59.6} \\
 & 128 & 42.7/\textbf{55.4} & 54.0/\textbf{58.4} & 95.5/\textbf{96.2} & 96.5/\textbf{97.3} & 61.4/\textbf{62.3} & 74.6/\textbf{74.9} & 64.3/\textbf{67.4} & 47.2/\textbf{53.1} & 52.6/\textbf{63.9} \\
 & 256 & 45.9/\textbf{59.7} & 59.8/\textbf{64.6} & 96.1/\textbf{97.1} & 97.0/\textbf{97.8} & 63.3/\textbf{64.3} & 76.9/\textbf{77.5} & 67.9/\textbf{70.6} & 53.6/\textbf{60.0} & 55.5/\textbf{67.1} \\
\bottomrule
\end{tabular}%
}
\caption{Qwen2.5-7B pass@$k$ up to $256$ for the selected checkpoints from Table~\ref{tab:all_models}. Cells report baseline/focal values.}
\label{tab:passk_qwen2_5_7b_pairwise}
\end{table*}

\begin{table*}[t]
\centering
\setlength{\tabcolsep}{4.5pt}
\resizebox{\textwidth}{!}{%
\begin{tabular}{cr|cccccc|ccc}
\toprule
\textbf{Method} & \textbf{$k$} & \textbf{AIME24} & \textbf{AIME25} & \textbf{AMC23} & \textbf{MATH500} & \textbf{Minerva} & \textbf{Olympiad} & \textbf{IFEval} & \textbf{SynLogic} & \textbf{GPQA} \\
\midrule
\multirow{9}{*}{\pairmethod{GRPO}{F-GRPO}} & 1 & 14.8/\textbf{17.4} & 8.3/\textbf{15.8} & 54.6/\textbf{58.3} & 79.5/\textbf{82.3} & \textbf{39.2}/38.5 & 43.1/\textbf{43.7} & 36.9/\textbf{41.2} & 10.7/10.7 & 19.0/\textbf{21.1} \\
 & 2 & 18.4/\textbf{23.0} & 12.5/\textbf{20.9} & 62.7/\textbf{68.4} & 84.3/\textbf{87.0} & 43.7/\textbf{44.2} & 49.3/\textbf{49.9} & 47.8/\textbf{53.1} & 14.6/\textbf{15.3} & 26.0/\textbf{28.3} \\
 & 4 & 22.1/\textbf{29.2} & 17.6/\textbf{25.7} & 70.0/\textbf{77.2} & 87.6/\textbf{90.1} & 47.6/\textbf{48.6} & 54.6/\textbf{55.1} & 57.8/\textbf{63.6} & 19.3/\textbf{21.1} & 32.6/\textbf{35.1} \\
 & 8 & 26.4/\textbf{35.8} & 23.2/\textbf{30.7} & 76.3/\textbf{84.0} & 90.0/\textbf{92.4} & 51.0/\textbf{52.1} & 59.1/\textbf{59.6} & 66.6/\textbf{72.1} & 24.9/\textbf{28.0} & 38.0/\textbf{41.2} \\
 & 16 & 31.8/\textbf{42.1} & 28.5/\textbf{35.8} & 81.8/\textbf{88.3} & 91.9/\textbf{94.0} & 53.9/\textbf{55.2} & 62.8/\textbf{63.6} & 73.8/\textbf{78.5} & 31.3/\textbf{35.8} & 42.3/\textbf{46.6} \\
 & 32 & 37.7/\textbf{47.1} & 33.3/\textbf{40.8} & 86.7/\textbf{91.1} & 93.4/\textbf{95.3} & 56.4/\textbf{58.0} & 66.2/\textbf{67.1} & 79.4/\textbf{83.3} & 38.2/\textbf{43.8} & 45.7/\textbf{51.5} \\
 & 64 & 43.4/\textbf{51.4} & 37.6/\textbf{46.2} & 91.0/\textbf{93.6} & 94.5/\textbf{96.2} & 58.5/\textbf{60.3} & 69.3/\textbf{70.2} & 83.8/\textbf{87.3} & 45.0/\textbf{51.4} & 49.0/\textbf{55.7} \\
 & 128 & 48.4/\textbf{55.9} & 41.4/\textbf{52.7} & 94.3/\textbf{96.0} & 95.5/\textbf{96.8} & 60.1/\textbf{62.1} & 72.0/\textbf{73.0} & 87.7/\textbf{90.8} & 51.3/\textbf{58.3} & 52.3/\textbf{59.5} \\
 & 256 & 52.5/\textbf{61.1} & 45.6/\textbf{60.0} & 96.2/\textbf{98.2} & 96.4/\textbf{97.4} & 61.7/\textbf{63.6} & 74.2/\textbf{75.5} & 91.2/\textbf{93.6} & 57.1/\textbf{64.5} & 55.4/\textbf{62.8} \\
\midrule
\multirow{9}{*}{\pairmethod{DAPO}{F-DAPO}} & 1 & 18.6/\textbf{19.1} & 17.6/\textbf{17.7} & 62.0/\textbf{63.0} & 84.4/84.4 & \textbf{40.0}/39.5 & 47.8/\textbf{48.2} & 33.8/\textbf{39.6} & \textbf{11.8}/11.3 & 16.4/\textbf{18.7} \\
 & 2 & 23.2/\textbf{23.7} & 21.8/\textbf{21.9} & 71.6/\textbf{72.8} & 88.7/\textbf{88.9} & 46.1/\textbf{46.2} & 54.9/\textbf{55.3} & 45.8/\textbf{52.4} & 17.3/\textbf{17.5} & 23.8/\textbf{27.0} \\
 & 4 & 28.0/28.0 & 25.7/\textbf{26.3} & 78.7/\textbf{79.9} & 91.4/91.4 & 51.1/\textbf{51.5} & 60.3/\textbf{60.7} & 56.5/\textbf{63.2} & 24.2/\textbf{25.1} & 31.7/\textbf{35.3} \\
 & 8 & \textbf{33.6}/32.8 & 30.2/\textbf{31.8} & 83.5/\textbf{84.6} & \textbf{93.2}/93.0 & 55.3/\textbf{55.6} & 64.5/\textbf{64.9} & 65.3/\textbf{71.4} & 32.0/\textbf{33.3} & 38.7/\textbf{42.6} \\
 & 16 & \textbf{39.7}/38.3 & 35.6/\textbf{38.0} & 86.7/\textbf{88.3} & \textbf{94.7}/94.3 & 58.3/\textbf{59.0} & 67.8/\textbf{68.3} & 72.4/\textbf{77.3} & 40.1/\textbf{42.0} & 44.7/\textbf{49.0} \\
 & 32 & \textbf{45.9}/43.6 & 41.5/\textbf{44.1} & 89.3/\textbf{91.6} & \textbf{95.9}/95.3 & 60.7/\textbf{61.8} & 70.6/\textbf{71.0} & 77.8/\textbf{81.6} & 48.6/\textbf{50.8} & 50.3/\textbf{54.8} \\
 & 64 & \textbf{52.4}/48.1 & 46.7/\textbf{49.2} & 92.0/\textbf{94.1} & \textbf{96.9}/96.3 & 62.9/\textbf{64.1} & 73.2/\textbf{73.5} & 81.8/\textbf{85.2} & 56.8/\textbf{59.1} & 55.3/\textbf{60.1} \\
 & 128 & \textbf{58.5}/53.1 & 50.4/\textbf{53.5} & 94.5/\textbf{95.6} & \textbf{97.5}/97.1 & 65.1/\textbf{66.0} & 75.6/\textbf{75.8} & 85.2/\textbf{88.1} & 64.3/\textbf{66.2} & 59.9/\textbf{64.4} \\
 & 256 & \textbf{63.8}/59.7 & 54.0/\textbf{58.3} & 96.2/\textbf{96.8} & \textbf{97.8}/97.7 & 67.1/\textbf{67.9} & 77.9/\textbf{78.0} & 88.0/\textbf{90.6} & 71.3/\textbf{72.4} & 63.9/\textbf{67.5} \\
\midrule
\multirow{9}{*}{\pairmethod{CISPO}{F-CISPO}} & 1 & 21.5/21.5 & 17.6/\textbf{18.6} & \textbf{63.1}/60.7 & 84.2/84.2 & 40.2/\textbf{42.1} & 48.4/\textbf{48.9} & 32.6/\textbf{35.9} & \textbf{11.6}/11.1 & 23.6/\textbf{24.6} \\
 & 2 & \textbf{27.5}/27.2 & 20.9/\textbf{22.4} & \textbf{72.0}/71.3 & 88.3/\textbf{88.7} & 46.0/\textbf{48.2} & 55.2/\textbf{55.8} & 42.6/\textbf{47.3} & \textbf{17.1}/16.1 & 32.5/\textbf{34.3} \\
 & 4 & 32.5/\textbf{32.8} & 24.7/\textbf{26.8} & \textbf{80.7}/80.5 & 91.0/\textbf{91.5} & 50.7/\textbf{52.7} & 60.5/\textbf{61.2} & 51.8/\textbf{57.5} & \textbf{24.0}/22.7 & 41.0/\textbf{43.7} \\
 & 8 & 37.4/\textbf{38.5} & 30.0/\textbf{32.5} & 85.9/\textbf{86.0} & 92.9/\textbf{93.6} & 54.4/\textbf{56.2} & 64.5/\textbf{65.3} & 59.9/\textbf{66.4} & \textbf{32.4}/30.9 & 47.9/\textbf{51.7} \\
 & 16 & 42.6/\textbf{44.2} & 36.4/\textbf{39.2} & 89.4/\textbf{89.7} & 94.4/\textbf{95.0} & 57.3/\textbf{59.0} & 67.7/\textbf{68.6} & 67.1/\textbf{73.5} & \textbf{41.4}/39.8 & 53.4/\textbf{58.4} \\
 & 32 & 47.6/\textbf{50.4} & 42.4/\textbf{45.6} & 92.0/\textbf{92.3} & 95.6/\textbf{96.0} & 59.5/\textbf{61.5} & 70.4/\textbf{71.6} & 73.2/\textbf{78.9} & \textbf{50.1}/48.2 & 58.3/\textbf{64.6} \\
 & 64 & 51.8/\textbf{56.6} & 46.8/\textbf{51.2} & 94.0/\textbf{94.7} & 96.6/\textbf{96.9} & 61.5/\textbf{63.7} & 72.7/\textbf{74.3} & 78.2/\textbf{83.2} & \textbf{57.8}/55.7 & 63.2/\textbf{70.8} \\
 & 128 & 55.5/\textbf{63.0} & 50.7/\textbf{56.5} & 96.5/\textbf{96.6} & 97.4/\textbf{97.6} & 63.4/\textbf{65.5} & 74.7/\textbf{76.7} & 82.4/\textbf{87.0} & \textbf{64.8}/62.3 & 67.9/\textbf{76.6} \\
 & 256 & 59.3/\textbf{69.5} & 55.4/\textbf{62.7} & 97.4/\textbf{98.6} & 97.9/\textbf{98.2} & 65.1/\textbf{67.2} & 76.6/\textbf{78.8} & 85.9/\textbf{90.3} & \textbf{71.1}/68.0 & 72.4/\textbf{81.6} \\
\bottomrule
\end{tabular}%
}
\caption{Qwen3-4B-Base pass@$k$ up to $256$ for the selected checkpoints from Table~\ref{tab:all_models}. Cells report baseline/focal values.}
\label{tab:passk_qwen3_4b_base_pairwise}
\end{table*}

\begin{table*}[t]
\centering
\setlength{\tabcolsep}{4.5pt}
\resizebox{\textwidth}{!}{%
\begin{tabular}{cr|cccccc|ccc}
\toprule
\textbf{Method} & \textbf{$k$} & \textbf{AIME24} & \textbf{AIME25} & \textbf{AMC23} & \textbf{MATH500} & \textbf{Minerva} & \textbf{Olympiad} & \textbf{IFEval} & \textbf{SynLogic} & \textbf{GPQA} \\
\midrule
\multirow{9}{*}{\pairmethod{GRPO}{F-GRPO}} & 1 & 10.7/\textbf{12.1} & 0.7/\textbf{1.0} & \textbf{30.5}/29.8 & \textbf{55.0}/54.1 & \textbf{21.8}/21.0 & 19.4/\textbf{20.1} & 54.1/\textbf{56.4} & \textbf{4.7}/4.6 & \textbf{17.5}/15.2 \\
 & 2 & 15.8/\textbf{17.5} & 1.4/\textbf{1.8} & 41.0/\textbf{41.4} & \textbf{63.4}/63.1 & 28.1/28.1 & 25.5/\textbf{26.7} & 61.7/\textbf{64.0} & \textbf{7.0}/6.8 & \textbf{24.4}/22.4 \\
 & 4 & 20.6/\textbf{22.9} & 2.6/\textbf{3.2} & 50.6/\textbf{52.4} & 69.7/\textbf{70.1} & 34.0/\textbf{34.7} & 31.1/\textbf{32.8} & 66.7/\textbf{68.9} & \textbf{9.8}/9.4 & 29.5/\textbf{30.7} \\
 & 8 & 24.3/\textbf{27.5} & 4.6/\textbf{5.1} & 58.7/\textbf{61.9} & 74.9/\textbf{75.7} & 39.2/\textbf{40.3} & 36.4/\textbf{38.4} & 70.0/\textbf{72.2} & \textbf{13.3}/12.7 & 35.9/\textbf{36.3} \\
 & 16 & 26.9/\textbf{30.7} & 7.4/\textbf{7.6} & 65.7/\textbf{70.1} & 79.3/\textbf{80.4} & 43.6/\textbf{44.9} & 41.4/\textbf{43.7} & 72.1/\textbf{74.5} & \textbf{17.7}/17.0 & 41.2/\textbf{41.7} \\
 & 32 & 29.2/\textbf{33.0} & 10.3/\textbf{10.6} & 72.4/\textbf{77.2} & 82.9/\textbf{84.7} & 47.6/\textbf{48.9} & 46.1/\textbf{48.6} & 73.8/\textbf{76.3} & \textbf{22.8}/21.9 & 45.6/\textbf{47.0} \\
 & 64 & 32.0/\textbf{35.8} & 13.3/\textbf{14.3} & 78.7/\textbf{83.1} & 86.0/\textbf{88.3} & 51.4/\textbf{52.8} & 50.7/\textbf{53.2} & 75.3/\textbf{77.7} & \textbf{27.8}/26.9 & 49.6/\textbf{51.5} \\
 & 128 & 35.8/\textbf{40.0} & 16.5/\textbf{20.5} & 83.9/\textbf{87.6} & 88.6/\textbf{91.1} & 55.2/\textbf{56.5} & 55.2/\textbf{57.4} & 76.7/\textbf{78.7} & \textbf{32.3}/31.5 & 52.9/\textbf{55.1} \\
 & 256 & 40.7/\textbf{46.1} & 21.5/\textbf{29.5} & 88.2/\textbf{90.6} & 90.6/\textbf{92.9} & 59.0/\textbf{60.1} & 59.3/\textbf{61.3} & 78.0/\textbf{79.6} & \textbf{36.4}/35.5 & 55.1/\textbf{57.6} \\
\midrule
\multirow{9}{*}{\pairmethod{DAPO}{F-DAPO}} & 1 & \textbf{12.8}/11.1 & 1.0/\textbf{1.7} & \textbf{33.1}/31.9 & 55.9/\textbf{58.6} & \textbf{22.4}/22.3 & 21.0/\textbf{23.2} & 51.2/\textbf{53.0} & \textbf{4.8}/4.3 & 15.7/\textbf{17.0} \\
 & 2 & \textbf{17.2}/16.1 & 1.8/\textbf{3.0} & \textbf{43.3}/42.2 & 61.6/\textbf{66.5} & 27.9/\textbf{29.0} & 25.7/\textbf{29.3} & 59.7/\textbf{61.6} & \textbf{6.7}/6.3 & 21.2/\textbf{23.6} \\
 & 4 & 20.4/\textbf{21.4} & 2.9/\textbf{4.8} & 52.6/\textbf{53.0} & 66.2/\textbf{72.5} & 32.9/\textbf{35.1} & 29.8/\textbf{34.8} & 65.6/\textbf{67.0} & 8.6/\textbf{8.8} & 26.1/\textbf{29.8} \\
 & 8 & 24.8/\textbf{25.0} & 4.4/\textbf{6.8} & 60.4/\textbf{62.0} & 70.0/\textbf{77.5} & 37.3/\textbf{40.1} & 33.4/\textbf{39.7} & 69.2/\textbf{70.5} & 11.0/\textbf{12.2} & 30.6/\textbf{35.5} \\
 & 16 & 28.2/\textbf{28.6} & 6.0/\textbf{8.9} & 65.8/\textbf{69.7} & 73.1/\textbf{81.7} & 41.1/\textbf{44.5} & 36.6/\textbf{44.3} & 71.7/\textbf{73.0} & 14.1/\textbf{16.4} & 34.9/\textbf{40.7} \\
 & 32 & 31.3/\textbf{32.3} & 8.1/\textbf{11.5} & 70.2/\textbf{75.4} & 76.0/\textbf{85.2} & 44.6/\textbf{48.6} & 39.6/\textbf{48.7} & 73.7/\textbf{75.1} & 17.7/\textbf{21.1} & 38.9/\textbf{44.9} \\
 & 64 & 34.5/\textbf{36.2} & 10.8/\textbf{15.5} & 74.5/\textbf{79.9} & 78.9/\textbf{88.1} & 47.8/\textbf{52.5} & 42.5/\textbf{53.1} & 75.4/\textbf{77.0} & 21.6/\textbf{25.6} & 42.2/\textbf{48.4} \\
 & 128 & 37.5/\textbf{40.2} & 14.1/\textbf{21.5} & 77.7/\textbf{84.2} & 81.5/\textbf{90.4} & 50.9/\textbf{56.1} & 45.4/\textbf{57.3} & 76.7/\textbf{78.5} & 25.4/\textbf{29.4} & 44.7/\textbf{51.3} \\
 & 256 & 40.8/\textbf{44.4} & 18.5/\textbf{28.7} & 79.5/\textbf{88.3} & 83.8/\textbf{92.0} & 54.1/\textbf{59.3} & 48.4/\textbf{61.3} & 77.8/\textbf{79.5} & 28.9/\textbf{33.0} & 47.0/\textbf{53.7} \\
\midrule
\multirow{9}{*}{\pairmethod{CISPO}{F-CISPO}} & 1 & 9.7/\textbf{10.6} & 1.0/\textbf{2.0} & 32.9/\textbf{34.1} & \textbf{56.9}/56.5 & 21.8/\textbf{22.1} & \textbf{22.5}/21.5 & \textbf{54.6}/52.6 & 4.3/\textbf{5.4} & \textbf{18.2}/17.0 \\
 & 2 & 15.0/\textbf{15.9} & 1.8/\textbf{3.4} & 43.2/\textbf{44.9} & 64.1/\textbf{64.5} & 28.1/28.1 & \textbf{27.9}/27.5 & \textbf{62.5}/60.6 & 6.2/\textbf{7.4} & \textbf{24.1}/22.8 \\
 & 4 & 21.0/\textbf{21.2} & 3.2/\textbf{5.1} & 53.2/\textbf{53.3} & 69.8/\textbf{70.6} & \textbf{34.0}/33.7 & 32.8/\textbf{33.0} & \textbf{67.6}/65.7 & 8.6/\textbf{9.8} & \textbf{29.5}/27.8 \\
 & 8 & 25.4/\textbf{26.8} & 5.1/\textbf{7.0} & 59.4/\textbf{61.3} & 74.3/\textbf{75.6} & \textbf{39.3}/38.8 & 37.3/\textbf{38.1} & \textbf{70.8}/69.0 & 11.7/\textbf{12.8} & \textbf{34.1}/32.3 \\
 & 16 & 28.7/\textbf{31.3} & 7.5/\textbf{8.9} & 64.1/\textbf{66.9} & 77.9/\textbf{79.8} & \textbf{44.0}/43.3 & 41.4/\textbf{42.8} & \textbf{73.2}/71.3 & 15.1/\textbf{16.6} & \textbf{38.3}/36.2 \\
 & 32 & 31.6/\textbf{34.1} & 10.8/\textbf{11.2} & 68.8/\textbf{70.6} & 81.0/\textbf{83.4} & \textbf{48.4}/47.5 & 45.1/\textbf{47.2} & \textbf{75.0}/73.1 & 18.7/\textbf{20.9} & \textbf{42.2}/39.5 \\
 & 64 & 35.1/\textbf{36.1} & \textbf{15.1}/14.3 & 73.4/\textbf{74.1} & 83.9/\textbf{86.4} & \textbf{52.4}/51.5 & 48.7/\textbf{51.1} & \textbf{76.2}/74.7 & 22.1/\textbf{25.4} & \textbf{45.3}/42.4 \\
 & 128 & 37.9/\textbf{39.0} & \textbf{20.0}/18.6 & 76.1/\textbf{78.9} & 86.7/\textbf{88.9} & \textbf{56.1}/55.3 & 52.1/\textbf{54.8} & \textbf{77.3}/76.2 & 25.7/\textbf{29.7} & \textbf{47.7}/45.1 \\
 & 256 & 39.4/\textbf{42.8} & \textbf{25.4}/24.5 & 79.1/\textbf{82.6} & 89.1/\textbf{91.0} & \textbf{59.5}/58.8 & 55.4/\textbf{58.7} & \textbf{78.4}/77.3 & 29.4/\textbf{33.9} & \textbf{49.7}/47.7 \\
\bottomrule
\end{tabular}%
}
\caption{Llama-3.2-3B-Instruct pass@$k$ up to $256$ for the selected checkpoints from Table~\ref{tab:all_models}. Cells report baseline/focal values.}
\label{tab:passk_llama_3_2_3b_instruct_pairwise}
\end{table*}

\section{LLM Training Dynamics}
\label{app:llm_training_dynamics}

We include training-dynamics plots (Figures \ref{fig:qwen3_grpo_training_dynamics} - \ref{fig:qwen3_cispo_training_dynamics}) for the Qwen3-4B-Base setup, for which denser periodic evaluation was practical because generation length was capped at $4096$ tokens. These curves are intended as diagnostics rather than replacements for the main evaluation protocol. We score AIME25 every $25$ optimization steps using $64$ generations per problem, so the plotted pass@1 and pass@64 are computed from these periodic $64$-sample evaluations rather than the final $1024$-sample evaluation reported in the main tables. In the reward panel, the bold curve shows an exponential moving average (EMA) with smoothing coefficient $0.08$, overlaid on the raw per-step reward trace.

\begin{figure*}[t!]
\centering
\includegraphics[width=\textwidth]{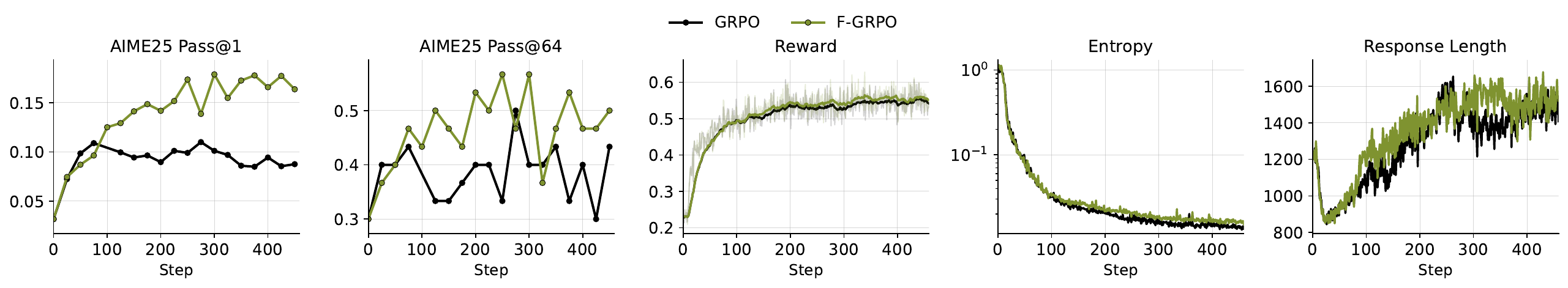}
\caption{Training dynamics on Qwen3-4B-Base for GRPO and F-GRPO (\(\gamma=0.5\)).}
\label{fig:qwen3_grpo_training_dynamics}
\end{figure*}

\begin{figure*}[t!]
\centering
\includegraphics[width=\textwidth]{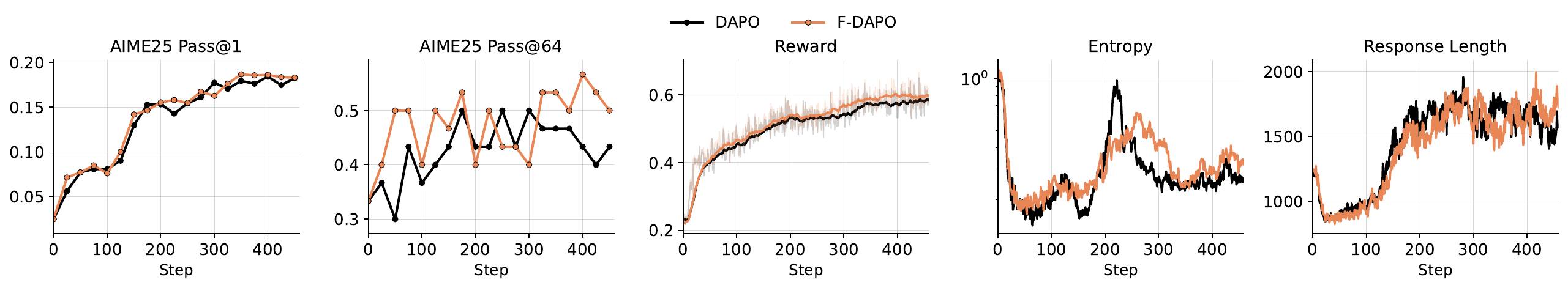}
\caption{Training dynamics on Qwen3-4B-Base for DAPO and F-DAPO (\(\gamma=0.5\)).}
\label{fig:qwen3_dapo_training_dynamics}
\end{figure*}

\begin{figure*}[t!]
\centering
\includegraphics[width=\textwidth]{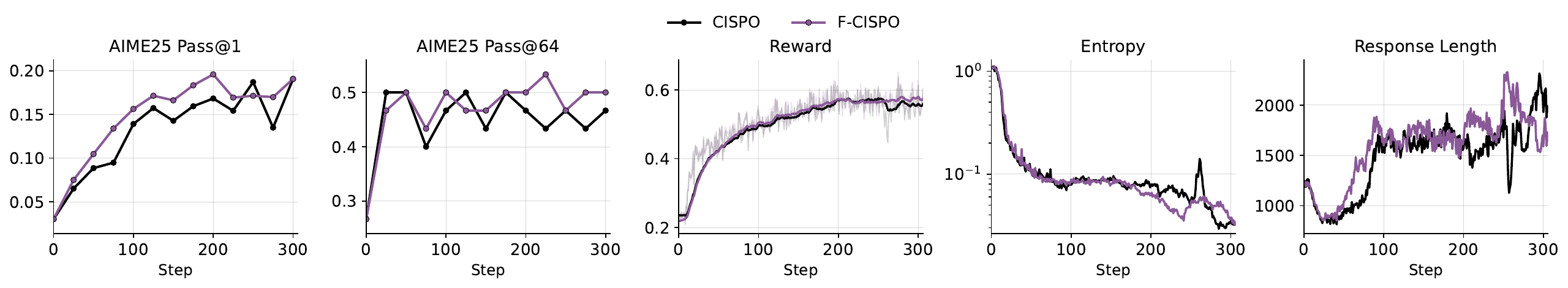}
\caption{Training dynamics on Qwen3-4B-Base for CISPO and F-CISPO (\(\gamma=0.5\)).}
\label{fig:qwen3_cispo_training_dynamics}
\end{figure*}

\section{Statistical Significance}
\label{app:statsign}

To assess the statistical significance of performance differences between the baseline and F-GRPO models, we employ a paired m-out-of-n subsampling test following \citep{subsampling}. For each benchmark, we generate $n=1024$ solutions per problem and use $m=256$ generations (i.e., subsample size $m$) to estimate pass@1 and pass@256 metrics. Specifically, for each subsampling iteration we randomly sample $m=256$ generations without replacement for each problem, compute the pass@$k$ metric using the analytical formula $1 - \binom{n-c}{k} / \binom{n}{k}$ where $n$ is the number of sampled generations and $c$ is the number of correct solutions among them, and average across all problems to obtain a single pass@$k$ estimate for both baseline and F-GRPO models. We perform 50,000 subsampling iterations to obtain the distribution of paired differences in pass@$k$ between the two models.

We conduct a two-sided statistical test with significance level $\alpha = 0.05$. A difference is considered statistically significant if the two-sided $p$-value is less than $0.05$, which is equivalent to the 95\% confidence interval of the subsampling distribution not containing zero.

\section{Notation}
\label{app:notation}

Table~\ref{tab:notation} summarizes the main notation used throughout the paper.

\begin{table*}[t]
\centering
\setlength{\tabcolsep}{0.25em}
\small
\begin{tabular}{lcp{8.7cm}}
\toprule
\multicolumn{1}{c}{\textbf{Category}} & \multicolumn{1}{c}{\textbf{Symbol}} & \multicolumn{1}{c}{\textbf{Meaning}} \\
\midrule
\addlinespace[0.8em]
\multirow{13}{*}{\textbf{Trajectory-level variables}} & $\pi_\theta$ & The policy parameterized by $\theta$ \\
& $D$ & Prompt distribution \\
& $x$ & Given prompt \\
& $o, y$ & A complete response (trajectory) generated by $\pi_\theta$ when given $x$ \\
& $y_t$ & The $t$-th token of response $y$ \\
& $N$ & Group size: number of rollouts sampled per prompt \\
& $R_i$ & Binary reward for rollout $i$ ($R_c$ if correct, $R_w$ if incorrect) \\
& $R_c, R_w$ & Reward values for correct and incorrect rollouts ($R_c > R_w$) \\
& $\mu_{\mathrm{pos}}(x)$ & Success probability: $\Pr_{o\sim\pi_\theta(\cdot|x)}[o\in \mathcal{C}(x)]$ \\
& $X_x$ & Number of correct rollouts for prompt $x$ in a sampled group \\
& $E_x$ & Target subset of correct rollouts for prompt $x$, $E_x\subseteq\mathcal{C}(x)$ \\
& $\tau_E(x)$ & Conditional mass of $E_x$: $\Pr_{o\sim\pi_\theta(\cdot|x)}[o\in E_x]$ \\
& $\widehat{\mu}_{\mathrm{pos}}(x)$ & Empirical success rate: fraction of correct rollouts in the sampled group \\
\addlinespace[0.8em]
\midrule
\addlinespace[0.8em]
\multirow{10}{*}{\textbf{Categorical framework variables}} & $p = \mathrm{softmax}(z)$ & Policy over finite action space $\mathcal{A}$ \\
& $z_i$ & Logit for action $i$ \\
& $\mathcal{P}, \mathcal{I}$ & Sets of correct and incorrect actions \\
& $S_+, S_-, U$ & Sampled correct actions, sampled incorrect actions, and unsampled actions \\
& $Q_{\mathrm{pos}}, Q_{\mathrm{neg}}$ & Total correct and incorrect probability masses \\
& $P_{\mathrm{pos}}, P_{\mathrm{neg}}$ & Sampled correct and incorrect probability masses \\
& $Q_{\mathrm{u,pos}}$ & Unsampled-correct probability mass \\
& $S_{+,2}, S_{-,2}$ & Second moments: $\sum_{i \in S_+} p_i^2$, $\sum_{i \in S_-} p_i^2$ \\
& $U_2$ & Unsampled second moment: $\sum_{i \in U} p_i^2$ \\
& $U_{\mathrm{pos},2}, U_{\mathrm{neg},2}$ & Unsampled second moments for correct and incorrect actions \\
\addlinespace[0.8em]
\midrule
\addlinespace[0.8em]
\multirow{14}{*}{\textbf{Expressions and operators}} & $\pi_\theta(\cdot \mid x, y_{<t})$ & Conditional probability of generating token $\cdot$ given prompt $x$ and previous tokens $y_{<t}$ \\
& $\bar{R}$ & Group mean reward: $\frac{1}{N}\sum_{j=1}^N R_j$ \\
& $\sigma_R$ & Standard deviation of rewards in the group \\
& $\widehat{A}_i^{\mathrm{GRPO}}$ & Group-relative advantage: $(R_i - \bar{R})/(\sigma_R + \epsilon)$ \\
& $\widehat{A}_i^{\mathrm{F-GRPO}}$ & Focal-weighted advantage: $g(x) \cdot \widehat{A}_i^{\mathrm{GRPO}}$ \\
& $r_{i,t}(\theta)$ & Importance ratio: $\pi_\theta(y_{i,t} \mid x, y_{i,<t}) / \pi_{\theta_{\mathrm{old}}}(y_{i,t} \mid x, y_{i,<t})$ \\
& $S_R$ & Batch baseline: $R_c P_{\mathrm{pos}} + R_w P_{\mathrm{neg}}$ \\
& $\Delta z_i$ & One-step logit update: $\frac{\eta}{N} p_i (R_i - S_R)$ \\
& $\Delta Q_{\mathrm{pos}}$ & One-step change in total correct mass \\
& $\Delta Q_{\mathrm{u,pos}}$ & One-step change in unsampled-correct mass \\
& $g(x)$ & Focal weight: $(1 - \widehat{\mu}_{\mathrm{pos}}(x))^\gamma$ \\
& $\gamma$ & Focal-weight exponent controlling reweighting strength \\
& $\eta$ & Learning rate \\
& $\mathcal{M}_{\mathrm{ret}}(t)$ & Retained positive mass: fraction of initial correct probability that has not decreased at step $t$ \\
\addlinespace[0.8em]
\midrule
\addlinespace[0.8em]
\multirow{3}{*}{\textbf{Events and probabilities}} & $\mathcal{A}_N(x)$ & Active event for prompt $x$: $\{0 < X_x < N\}$ \\
& $\mathcal{B}_{E,N}(x)$ & Tail-miss event: active update for prompt $x$ that samples no rollout from $E_x$ \\
& $\Pr(\mathcal{B}_{E,N}(x)\mid x)$ & Pointwise probability of the tail-miss event \\
\addlinespace[0.8em]
\midrule
\addlinespace[0.8em]
\multirow{4}{*}{\textbf{Sets}} & $\Omega_x$ & Space of complete rollouts for prompt $x$ \\
& $\mathcal{C}(x)$ & Subset of correct rollouts for prompt $x$ \\
& $\mathcal{A}$ & Finite action space in the categorical framework \\
& $\mathcal{A}^+$ & Subset of correct actions in the categorical simulation \\
\bottomrule
\end{tabular}
\vspace{0.5em}
\caption{Notation used in the paper.}
\label{tab:notation}
\end{table*}

\end{document}